\documentclass{article}

\usepackage{PRIMEarxiv}

\usepackage[utf8]{inputenc} 
\usepackage[T1]{fontenc}    
\usepackage{xcolor}         
\usepackage{hyperref}       
\usepackage{url}            
\usepackage{amsfonts}       
\usepackage{nicefrac}       
\usepackage{microtype}      
\usepackage{fancyhdr}       
\usepackage{natbib}         

\usepackage[inkscapelatex=false]{svg}
\usepackage{tabularx}
\usepackage{amssymb}
\usepackage{pifont}
\usepackage{booktabs}
\usepackage[most]{tcolorbox}
\usepackage{wrapfig}
\usepackage{tabularx}
\usepackage[most]{tcolorbox}
\usepackage{graphicx}
\usepackage{tabularx}
\usepackage{algorithm}
\usepackage{algorithmicx}
\usepackage{algcompatible}
\usepackage{algcompatible}
\usepackage{array}
\usepackage{enumitem}
\usepackage[most]{tcolorbox}

\usepackage{amsmath}
\usetikzlibrary{arrows.meta, positioning}

\definecolor{elegantred}{RGB}{150,40,40}

\usepackage{enumitem}
\usepackage{tikz}

\newcommand*\circled[1]{%
  \tikz[baseline=(char.base)]{
    \node[shape=circle, draw, inner sep=1pt, minimum size=0.7em] (char) {#1};
  }%
}

\newcommand{\indep}{\mathrel{\perp\!\!\!\perp}}

\usepackage{caption}

\definecolor{memfill}{RGB}{237,244,252}   
\definecolor{memline}{RGB}{ 60,105,160}
\definecolor{cffill} {RGB}{238,247,240}   
\definecolor{cfline} {RGB}{ 56,125, 82}
\definecolor{spurfill}{RGB}{255,226,140}  
\definecolor{spurtext}{RGB}{140, 60,  0}

\usepackage{hyperref}
\usepackage{url}

\usepackage{amsthm}
\usepackage{bbm}

\newtheorem{theorem}{Theorem}
\newtheorem{proposition}{Proposition}
\newtheorem{lemma}{Lemma}
\newtheorem{corollary}{Corollary}

\theoremstyle{definition}

\newtheorem{assumption}{Assumption}

\theoremstyle{remark}

\usepackage{minitoc}

\title{ \smash{\raisebox{-0.25\height}{\includegraphics[height=1.5em]{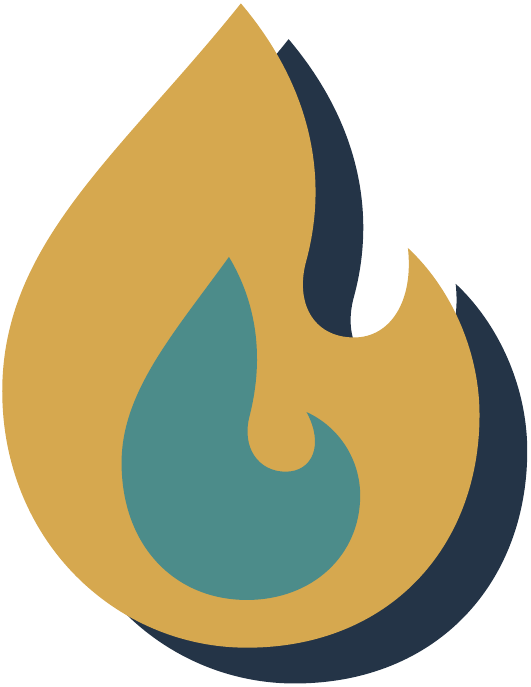}}}%
      \ Beyond the Shadows of Plato's Cave:\\Evaluating False Memory in Autonomous Agents via Counterfactual Reasoning
}

\author{
  \textbf{Quan M. Tran}$^{1}$, 
  \textbf{Zhuo Huang}$^{2}$,
  \textbf{Zhen Fang}$^{3}$,
  \textbf{Jing Zhang}$^{4}$,\\
  \textbf{Mingming Gong}$^5$,
  \textbf{Tongliang Liu}$^{1}$\\[1ex]
  \small{$^1$Sydney AI Centre, The University of Sydney;}
  \small{$^2$Australian Institute for Machine Learning, Adelaide University;}\\
  \small{$^3$Australian Artificial Intelligence Institute, University of Technology Sydney;}
  \small{$^4$Wuhan University;}\\
  \small{$^5$School of Mathematics and Statistics, The University of Melbourne}\\
}

\begin{document}
\doparttoc
\faketableofcontents

\maketitle

\begin{abstract}
Autonomous agents increasingly rely on memory to generalize beyond their training environments.
However, agents are bounded by what they have seen and believed, and leveraging such memories in unseen environments can introduce biases into their internal beliefs. We formalize this phenomenon as \textit{false memory}, which can arise from spurious correlations, environment shifts, and knowledge conflicts.
Despite its importance, false memory is difficult to evaluate because it stems from agent internal beliefs and is easily confounded with ordinary generalization failures.
Therefore, we propose FAME, a training-free framework that evaluates false memory through the evolution of agent beliefs under counterfactual reasoning.
Specifically, counterfactual scenarios reveal how beliefs change as the latent concept of memory shifts under hypothetical interventions; thus, measuring the resulting concept drift provides a signal for distinguishing faithful versus false memory.
Such concepts can be estimated from agent hidden states before answer generation, avoiding the need for reward design or answer sampling.
Empirical experiments reveal that simply monitoring answers often fails to detect false memory, while FAME achieves AUROCs of 76.2\% -- 96.7\% across false-memory settings, and outperforms the best baseline by 3.4\% -- 23.3\% across realistic benchmarks, spanning math reasoning (GSM-Symbolic), code generation (GitChameleon), and complex reasoning (BigBench-Hard).
We further release corresponding counterfactual templates and facilitate future research on false memory.
\end{abstract}

\section{Introduction}
Agents acquire broad capabilities through large-scale training~\citep{brown2020language, kaplan2020scaling, bommasani2021opportunities}, but these capabilities are largely fixed once training is finished~\citep {shinn2023reflexion, niu2022efficient, chen2026test}. 
In a fast-changing world, agents must continually adapt to new problems, environments, and evolving knowledge that cannot be fully anticipated during training~\citep{abel2023definition, lu2020reset}. 
This motivates autonomous agents to accumulate experience through self-improvement ~\citep{shinn2023reflexion, madaan2023self}, forming memories that retain successful experiences~\citep{wang2023voyager,yang2024buffer,zhao2024expel}, distill reusable skills\citep{mi2026skill, wang2024agent, yao2026can}, and support efficient adaptation to future problems~\citep{finn2017model, tran2026bifrost}. Memory therefore plays a critical role in extending agent-acquired knowledge to unseen environments.

However, agents with memory remain bounded by what they have seen and believed. Memories can encode beliefs entangled with specific environments in which agents were trained or practiced~\citep{cobbe2019quantifying, liao2026belief}, biasing internal beliefs when applied to unseen environments~\citep{hu2026evaluating}. 
We formalize this problem as \textit{false memory}, which can arise from spurious correlation~\citep{du2023shortcut, creager2021environment}, environment shift~\citep{amodei2016concrete}, and knowledge conflict~\citep{xu2024knowledge, zhao2026understanding, misra2026gitchameleon}. 
For instance, a personalized agent may recognize ``X'' as a mathematical notation rather than a social media network because its historical conversations were dominated by academic questions. 
An autonomous driving system specialized in the US may fail in the UK due to differences such as driving sides and measurement systems, while knowledge conflict may arise when an agent encounters a new policy or regulation that contradicts its prior knowledge. Such failures cause agents to retain or apply beliefs that are no longer valid, giving rise to false memory, analogous to Plato's allegory of the cave.

Despite its importance, false memory is challenging and costly to evaluate. Existing works largely conflate false memory with generalization failure and evaluate agents through observable answers~\citep{cobbe2019quantifying, kirk2021survey, zhong2024memorybank, maharana2024evaluating}.
Other approaches introduce reward functions or costly fine-tuning to construct evaluation data, requiring substantial human expertise and supervision~\citep{le2026sage, hou2024wikicontradict, su2024texttt}.
However, false memory originates from internal beliefs rather than observable outputs~\citep{orgad2025llms, azaria2023internal, kadavath2022language}, making answer-based evaluation and reward calibration inadequate, particularly when human expertise and labeled data are limited~\citep{li2024survey, cheang2026llms}. 
Reliable evaluation of false memory is therefore critical for developing effective adaptation strategies.

\begin{wrapfigure}{r}{0.45\textwidth}
    \vspace{-6mm}
    \includegraphics[width=\linewidth]{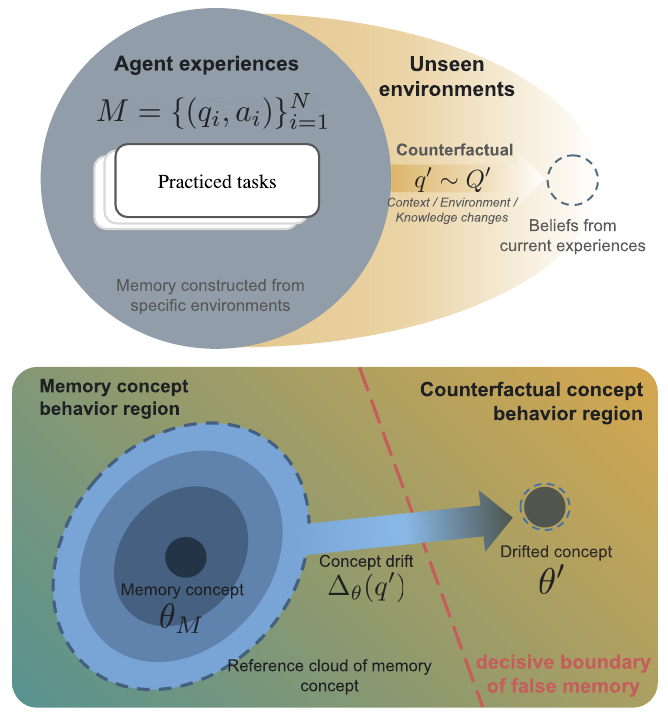}
    \vspace{-7mm}
    \caption{\small Illustration of FAME: It evaluates false memory through counterfactual reasoning by varying memory factors to observe concept drift. The counterfactual query and memory jointly establish boundaries for identifying false memory.}
    \label{fig:main}
    \vspace{-6mm}
\end{wrapfigure}
We propose FAME (Fig.~\ref{fig:main}), a training-free framework that evaluates false memory by tracking the evolution of an agent internal beliefs under counterfactual reasoning. 
Particularly, counterfactual scenarios reveal how the beliefs induced by memory respond to hypothetical interventions. 
Memory induces a latent concept that captures the agent belief~\citep{xie2021explanation, panwar2024context}, which can be approximated from hidden states~\citep{hendel2023context, todd2024function}.
When the agent encounters a counterfactual scenario, this concept may drift in the hidden-state representation space. 
Such drift can reflect appropriate adaptation or movement toward a region associated with false memory. 
FAME therefore measures whether the concept drift aligns with the decisive region by inducing counterfactual scenarios, providing a signal of whether the underlying memory remains faithful or has become false. 

For example, an autonomous driving system that practices in the US and stores its experiences as memory may fail when applied in the UK. FAME evaluates this through counterfactual reasoning: given the same memory, how does the agent-induced concept change when asked to drive in the UK? When robustness to environmental shifts is expected, the concept should remain within the memory reference region; thus, crossing this region indicates false memory. Moreover, consider a counterexample: if memory associates “X” with academic discussions, asking about “X” in a social-media context should induce a corresponding concept shift; thus, failure to adapt indicates false memory. When adaptation is expected, a faithful memory should shift toward the new context.

FAME is agnostic to answer generation and broadly applicable to realistic false-memory scenarios. 
When resolving a target query given a memory, the agent first induces a latent concept from memory demonstrations before generating the answer~\citep{xie2021explanation, hendel2023context, todd2024function}. 
We leverage this concept to assess false memory directly, avoiding answer generation and costly reward calibration. 
Moreover, FAME can be leveraged to evaluate empirical false-memory observations, i.e., spurious correlation, environment shift, and knowledge conflict, which can be characterized through counterfactual reasoning.

Our experiments demonstrate that false memory can be identified through concept drift, with statistically significant correlations (Pearson $r=0.48$–$0.98$, $p<0.001$) across three false-memory categories, i.e., spurious correlation, environment shift, and knowledge conflict, with AUROCs of 76.2\%–96.7\%. We further show that false memory is not necessarily observable from surface outputs, and evaluate FAME on realistic settings spanning math reasoning (GSM-Symbolic), code generation (GitChameleon), and complex reasoning (BigBench-Hard), where it outperforms the best baseline by 3.4\%–23.3\% AUROC. Finally, we release counterfactual templates to facilitate future research on false memory.

\vspace{-2mm}
In summary, our contributions are:
\begin{itemize}
    \vspace{-2mm}
    \item \textbf{We highlight false memory as a critical problem for autonomous agents (Sec.~\ref{sec:problem_setup}) and propose FAME (Sec.~\ref{sec:proposed_method})}, a training-free framework that evaluates false memory through counterfactual reasoning and concept drift without answer generation or reward calibration.
    \vspace{-2mm}
    \item \textbf{We provide a taxonomy (Sec.~\ref{sec:taxonomy_fm}) and theoretical foundation for false memory}. The taxonomy covers spurious correlation, environment shift, and knowledge conflict. The theoretical analysis establishes the identifiability of false-memory evaluation via counterfactual reasoning, and concept-drift realization via agent hidden states.
    \vspace{-2mm}
    \item \textbf{We empirically demonstrate the effectiveness of FAME (Sec.~\ref{sec:emp_eval},~\ref{sec:real_eval})} across diverse false-memory settings and release a suite of counterfactual templates to facilitate future research.
\end{itemize}

\section{Problem Setup}
\label{sec:problem_setup}


An autonomous agent stores its experiences in a memory space $\mathcal{M}$. When solving a new task, it retrieves a memory set $M \sim \mathcal{M}$, where $M=\{(q_i,a_i)\}_{i=1}^{N}$ consists of demonstrations with task queries and corresponding answers. These demonstrations induce a latent concept $\theta_M=f(M)$ that maximizes the expected reward on the memory tasks, $\theta_M=\arg\max_{\theta}\mathbb{E}_{(q,a)\sim M}[r(\theta\mid q,a)]$,
where $r$ denotes the reward function of the memory tasks.

A target query $\hat q$ may fall outside the agent past experiences, making the memory-induced concept $\theta_M$ no longer appropriate. We refer to this failure as \textit{false memory}, where applying $\theta_M$ to the target task yields substantially lower reward,
\vspace{0mm}
\begin{equation}
    \hat r(\theta_M\mid\hat q)
    \ll
    \mathbb{E}_{(q,a)\sim M}[r(\theta\mid q,a)],
\vspace{0mm}
\end{equation}
where $\hat r$ denotes the reward function of the target task.

However, this reward-based definition requires designing reward functions and generating answers for both memory and target tasks, which is costly. Moreover, false memory concerns the agent internal beliefs and therefore cannot always be identified from observable answers. 

We thus seek a training-free, answer-free measure based directly on the change in the memory-induced concept. Formally, let $Q'$ be the set of counterfactual queries that could induce false memory of $M$. False-memory evaluation measures the alignment of concept drift caused by each intervention of counterfactual query $q' \sim Q'$ with its admissible region distinguishing faithful vs. false memory,
\begin{equation}
    \mathcal{F}(M) = 
    \mathbb{E}_{q'\sim Q'}
    \left[
    \left\langle
    \Delta\theta(q'),\mathcal{A}(q')
    \right\rangle
    \right],
\end{equation}
where $\Delta_\theta(q')$ measures concept drift by counterfactual reasoning with an intervention on $q'$, and $\mathcal{A}(q')$ is its corresponding admissible region such that projecting the concept drift into it identifies false memory.

We will show how FAME realizes false-memory evaluation via counterfactual reasoning (Sec.~\ref{sec:proposed_method}) and how counterfactual queries $Q'$ reflect false-memory observations in practice (Sec.~\ref{sec:taxonomy_fm}).

\section{Proposed Method}
\label{sec:proposed_method}

\subsection{Behavioral Change of Concept via Counterfactual}
We demonstrate that counterfactual inference on different hypothetical scenarios provides a mechanism to measure concept drift for false-memory evaluation.

\paragraph{Structural Causal Model of belief update.}
\begin{equation}
\begin{aligned}
M = f_M(U_M),\quad
\Theta_M = f_{\Theta_M}(M, U_{\Theta_M}),\quad
Q = f_Q(U_Q),\quad
\Theta' = f'(Q, \Theta_M, U'),
\end{aligned}
\end{equation}
where $U=(U_M, U_{\Theta_M}, U_Q, U')$ is the set of exogenous variables, and $M$, $\Theta_M$, $Q$, and $\Theta'$ are the endogenous variables corresponding to the memory, memory concept, given task query, and updated concept, respectively. The SCM consists of the deterministic functions $f_M$, $f_{\Theta_M}$, $f_Q$, and $f'$.

\newsavebox{\causalbox}
\savebox{\causalbox}{%
  \begin{tikzpicture}[
      node distance = 0.3cm and 0.3cm,
      var/.style = {draw, circle, minimum size=0.7cm, inner sep=0pt, font=\small},
      arr/.style = {-{Stealth[length=1.2mm]}, thick}
  ]
  \node[var] (M) {$M$};
  \node[var, below=of M]      (thetaM) {$\Theta_M$};
  \node[var, below=of thetaM] (theta)  {$\Theta'$};
  \node[var, left=of theta]   (Q)      {$Q$};
  \node[var, below=of theta]  (A)      {$A$};
  \draw[arr] (M)      -- (thetaM);
  \draw[arr] (thetaM) -- (theta);
  \draw[arr] (Q)      -- (theta);
  \draw[arr] (theta)  -- (A);
  \end{tikzpicture}%
}
\begin{wrapfigure}{r}{0.16\textwidth}
\vspace{-4mm}
  \centering
  \vspace{-\intextsep}
  \begin{tikzpicture}[
      node distance = 0.3cm and 0.3cm,
      var/.style = {draw, circle, minimum size=0.7cm, inner sep=0pt, font=\small},
      arr/.style = {-{Stealth[length=1.2mm]}, thick, shorten >=0.5pt, shorten <=0.5pt}
  ]

  \node[var] (M) {$M$};
  \node[var, below=of M]      (thetaM) {$\Theta_M$};
  \node[var, below=of thetaM] (theta)  {$\Theta'$};
  \node[var, left=of theta]   (Q)      {$Q$};

  \draw[arr] (M)      -- (thetaM);
  \draw[arr] (thetaM) -- (theta);
  \draw[arr] (Q)      -- (theta);

  \end{tikzpicture}
  
  \vspace{-2mm}
  \caption{\small Causal graph of agent belief update.}
  \label{fig:causal_graph_agent_belief_update}
\vspace{-2mm}
\end{wrapfigure}
Fig.~\ref{fig:causal_graph_agent_belief_update} and the SCM reflect the causal relationship that is motivated by existing works.
\cite{xie2021explanation} and~\cite{hendel2023context} show that an agent concentrates on a concept given demonstrations to solve a target task, which motivates $M \to \Theta$. 
Meanwhile, studies of in-context learning dynamics and belief-state representations show that the model internal belief state can change as it processes additional information~\citep{shai2024transformers,bigelow2024context}. We therefore model the query-conditioned concept as $\Theta'=f'(\Theta_M,Q,U')$, where $Q$ provides the scenario under which the memory-induced concept is evaluated or updated.
Intuitively, the SCM represents a belief update mechanism, i.e., memory initially concentrates a belief into its latent concept $\theta \in \Theta_M$ and subsequently updates to $\theta' \in \Theta'$ given a new query before generating the corresponding answer.

\paragraph{Counterfactual Reasoning.}
The SCM of belief update provides an underlying mechanism for evaluating false memory by asking ``Given a memory, what would have the concept changed, had the query been a different specific scenario?''.
For example, given a memory of an autonomous driving system in the USA, \textit{how  would it have behaved, had it been placed in the UK?}
Knowing how the concept changes under such a scenario enables us to evaluate whether the memory is faithful or has become false.
We thus perform counterfactual inference on the updated concept $\Theta'$ by intervening on the query $Q$ with hypothetical scenarios.

Formally, given factual evidence $M=m, \Theta_M=\theta_M, Q=q, \Theta'=\theta$ in the SCM, we perform the 3-step counterfactual framework~\citep{pearl2009causal}:
\begin{enumerate}[label=\protect\circled{\arabic*}, leftmargin=*, itemsep=6pt]
\vspace{-2.5mm}
  \item \textbf{Abduction}: 
  We first estimate the noise given the evidence, $P(U \mid m, \theta_M, q, \theta)$. According to the SCM, $U_M$ and $U_\Theta$ are pinned down, since the memory $M$ is observed and the concept estimation function $f_{\Theta_M}$ is deterministic; $U'$ is unchanged as we reuse the same mechanism across worlds. Thus, all exogenous variables are fixed during the counterfactual.
  \vspace{-2.5mm}
  \item \textbf{Action}: 
  We intervene on the query with a hypothetical scenario of interest, $do(Q=q')$, where $q'$ is a counterfactual query that could induce false memory. Such an intervention deletes $U_Q \to Q$ while leaving the memory $\Theta_M$ untouched, as it is not a descendant of $Q$.
  \vspace{-2.5mm}
  \item \textbf{Prediction}: 
  We then perform counterfactual inference of the updated concept $\Theta'$ to answer \textit{how the concept would have changed}.
  Here, we denote the counterfactual $\Theta'_{Q=q'}(u) = f'(\theta_M, q', U')$, where $\theta_M$ is factual evidence, and the noise $U'$ is already abducted. Thus, $ Q$ attributes any change in the concept to a fixed memory concept. This gives us identifiability of counterfactuals on the updated concept, given by Prop.~\ref{prop:cf_identifiability}.
\end{enumerate}

\begin{proposition}[Counterfactual Identifiability]
\label{prop:cf_identifiability}
    Assume the SCM is Markovian, given a memory $M=m$ and an intervention $do(Q=q')$, the counterfactual inference is
    \vspace{-1mm}
    \begin{equation}
        P(\Theta'_{Q=q'} \mid M=m) = P(\Theta' \mid do(Q=q'), M=m) = P(\Theta' \mid Q=q', M=m).
        \vspace{-2mm}
    \end{equation}
\end{proposition}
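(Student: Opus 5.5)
The plan is to prove the two equalities separately, using only the structure of the SCM and the Markovian assumption, i.e. that the exogenous variables $U_M, U_{\Theta_M}, U_Q, U'$ are mutually independent.

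\textbf{First equality.} We show $P(\Theta'_{Q=q'} \mid M=m) = P(\Theta' \mid do(Q=q'), M=m)$. In the mutilated model $\mathcal{M}_{q'}$, the equation $Q=f_Q(U_Q)$ is replaced by $Q=q'$, so $\Theta'_{Q=q'}(u) = f'(q', f_{\Theta_M}(f_M(u_M),u_{\Theta_M}), u')$. The evidence event $\{M=m\}$ depends only on $u_M$. Since $M$ is a non-descendant of $Q$, we have $M_{q'}(u)=M(u)$ for every $u$.

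Hence conditioning the counterfactual variable on factual $M=m$ gives the same set of units $u$ as conditioning on $M_{q'}=m$ in the mutilated model. The counterfactual distribution is then the pushforward of $P(u \mid M(u)=m)$ through $u\mapsto \Theta'_{Q=q'}(u)$. This is exactly $P(\Theta' \mid do(Q=q'), M=m)$ computed in $\mathcal{M}_{q'}$. This follows the abduction--action--prediction steps: abduction updates only the distribution of $(U_M,U_{\Theta_M})$ given $m$, and leaves $U'$ at its prior because $U'\indep U_M$.

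\textbf{Second equality.} We show $P(\Theta' \mid do(Q=q'), M=m) = P(\Theta' \mid Q=q', M=m)$ via the back-door criterion, or equivalently rule 2 of do-calculus. In the graph $M\to\Theta_M\to\Theta'\leftarrow Q$, the node $Q$ has no parents. Therefore there is no back-door path from $Q$ to $\Theta'$, and $(Q \indep \Theta' \mid M)$ holds in $G_{\underline{Q}}$, the graph with the edges out of $Q$ removed.

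A direct computation makes this concrete. Write $P(\Theta'\mid do(q'),m)=\sum_{\theta_M}P(\theta_M\mid m)P(\Theta'\mid \theta_M,q')$ by the truncated factorization. Then use $U_Q\indep(U_M,U_{\Theta_M},U')$, which gives $Q\indep (M,\Theta_M)$ observationally. This yields $P(\theta_M\mid m,q')=P(\theta_M\mid m)$, so the observational conditional $P(\Theta'\mid q',m)$ expands to the same sum.

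\textbf{Main obstacle.} The delicate step is the first equality. Counterfactual conditioning on factual evidence generally differs from interventional conditioning, and the identity holds here only because the evidence $M=m$ is a non-descendant of the intervened node. I would make this explicit via $M_{q'}(u)=M(u)$.

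A positivity assumption $P(Q=q', M=m)>0$ is also needed for the observational conditional to be defined, and I would state it as a side condition. If the evidence were enlarged to include $\Theta'=\theta$, as in the abduction step described earlier, the identity would need $U'$ to be degenerate or $f'$ invertible. I would therefore restrict the statement to conditioning on $M=m$, matching the proposition.
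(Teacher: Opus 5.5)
Your proposal is correct and follows the same route as the paper's proof. The first equality comes from $M$ being a non-descendant of $Q$ (so $M_{q'}=M$ and the evidence is pretreatment), and the second comes from Rule 2 of do-calculus, since the parentless root $Q$ is d-separated from $\Theta'$ given $M$ in $G_{\underline{Q}}$. Your unit-level justification of the first step (which the paper only cites as a reduction rule) and your explicit positivity condition $P(Q=q',M=m)>0$ are sound additions that the paper leaves implicit.
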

Intuitively, any change in the concept is caused by the intervened query $q'$ and can be measured via the posterior of the updated concept given the query and the memory (proof in App.~\ref{app:theoretical_analysis:proof_cf_identifiability}). 
Prop.~\ref{prop:cf_identifiability} enables us to further compare how a concept changes between factual and counterfactual behaviors. 
Therefore, we can establish a concept drift based on the counterfactual as follows.

\begin{theorem}[Concept-Drift Identifiability via Counterfactual]
\label{thm:concept_drift_identifiability_cf}
    Let $q_0$ be a factual query that induces a concept given the memory $M=m$, and let $q'$ be a counterfactual query representing a hypothetical scenario. Let $\gamma$ denote the counterfactual realization function, and let $\gamma_h$ denote the function that empirically extracts the induced concept.
    Under the twin-network~\citep{balke2022probabilistic}, moving from factual to counterfactual characterizes a concept drift as an Effect of Treatment on the Treated (ETT)~\citep{shpitser2012effects},
    \vspace{-1mm}
    \begin{equation}
    \label{eq:concept_drift_ett}
    \begin{split}
        ETT =  \Delta(m, q', q_0) &= \mathbb{E} \big[ \gamma(\Theta'_{q'}) - \gamma(\Theta'_{q_0}) \mid Q=q_0, M=m \big]\\ 
        \vspace{-1mm}
        &= \gamma_h\big(P(\Theta' \mid q', m)\big) - \gamma_h\big(P(\Theta' \mid q_0, m)\big).
    \end{split}
    \vspace{-2mm}
    \end{equation}
\end{theorem}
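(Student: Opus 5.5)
The plan is to build the twin network for the belief-update SCM and reduce each counterfactual term in the ETT to an interventional, and hence observational, quantity via Prop.~\ref{prop:cf_identifiability}.

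\textbf{Twin network.} Following \citet{balke2022probabilistic}, I would construct a factual copy $(M,\Theta_M,Q,\Theta')$ and a counterfactual copy $(M^\ast,\Theta_M^\ast,Q^\ast,\Theta'^\ast)$ that share all exogenous variables $U=(U_M,U_{\Theta_M},U_Q,U')$. In the counterfactual copy, $Q^\ast$ is replaced by the constant $q'$. Because $M$ and $\Theta_M$ are not descendants of $Q$, the shared noise gives $M^\ast=M$ and $\Theta_M^\ast=\Theta_M$. This is exactly the abduction step in the text.

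\textbf{Consistency.} I would split the expectation by linearity into $\mathbb{E}[\gamma(\Theta'_{q'})\mid Q=q_0,M=m]$ and $\mathbb{E}[\gamma(\Theta'_{q_0})\mid Q=q_0,M=m]$. The second term follows from the consistency axiom: on the event $Q=q_0$ we have $\Theta'_{q_0}=\Theta'$. So this term equals $\mathbb{E}[\gamma(\Theta')\mid q_0,m]$, which is a functional of $P(\Theta'\mid q_0,m)$.

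\textbf{Conditioning on the treated (main obstacle).} The first term is a genuine ETT quantity: it conditions on $Q=q_0$ while evaluating $\Theta'_{q'}$. The key step is to show $\Theta'_{q'}\indep Q\mid M$. In the twin network, $\Theta'^\ast=f'(\Theta_M,q',U')$ depends only on $(U_M,U_{\Theta_M},U')$, whereas $Q=f_Q(U_Q)$. Markovianity makes $U_Q$ independent of the other noises, so d-separation in the twin graph gives the independence (no back-door path from $Q$ to $\Theta'$ exists, since $Q$ has no parents besides $U_Q$). Hence
\[
P(\Theta'_{q'}\mid Q=q_0,M=m)=P(\Theta'_{q'}\mid M=m)=P(\Theta'\mid q',m),
\]
where the last equality is Prop.~\ref{prop:cf_identifiability}. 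I expect this to be the delicate point. The argument needs $U'$ to be independent of $U_Q$, and it needs conditioning on $M=m$ not to open a path; $M$ is not a collider between $Q$ and $U'$, so conditioning on it is safe.

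\textbf{Realization by $\gamma_h$.} Both terms are now functionals of the posteriors $P(\Theta'\mid q',m)$ and $P(\Theta'\mid q_0,m)$. I would define $\gamma_h(P):=\mathbb{E}_{\theta\sim P}[\gamma(\theta)]$, i.e.\ the empirical extractor of the concept realized in the hidden state, so that each conditional expectation equals $\gamma_h$ of the corresponding posterior. Subtracting the two terms gives the second line of \eqref{eq:concept_drift_ett}. The proof must state this definition explicitly, because the identity only holds under that identification of $\gamma_h$ with $\gamma$; in practice this is the assumption that hidden states approximate the posterior mean of the concept.
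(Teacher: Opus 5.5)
Your proposal is correct and follows essentially the same route as the paper's proof. The paper likewise builds the twin network with $M^\ast\equiv M$ and $\Theta_M^\ast\equiv\Theta_M$, splits by linearity, handles the factual term by consistency, uses the ignorability $\Theta'_{q'}\indep Q\mid M$ to drop the conditioning on $Q=q_0$, and closes with $\gamma_h(P)=\mathbb{E}_P\gamma(\Theta')$. The only minor difference is the step $P(\Theta'_{q'}\mid M=m)=P(\Theta'\mid q',m)$: you cite Prop.~\ref{prop:cf_identifiability} directly, whereas the paper re-applies ignorability to condition on $Q=q'$ and then uses consistency, and it makes the positivity requirement for that conditioning explicit.
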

Thm.~\ref{thm:concept_drift_identifiability_cf} characterizes concept drift as an ETT in the counterfactual setting (proof in App.~\ref{app:theoretical_analysis:proof_concept_drift_identifiability_cf}) and provides a formulation that can be obtained through conventional in-context learning.
Specifically, the final equality in Eq.~\eqref{eq:concept_drift_ett} can be empirically approximated by constructing in-context examples where the memory $m$ is prepended to the factual and counterfactual queries, $q_0$ and $q'$, respectively.
The corresponding concepts can then be approximated from the agent hidden states, and their difference can be used to measure concept drift.
We will describe how to approximate a concept via the agent hidden state in Sec.~\ref{sec:proposed_method:geometry_concept_drift} and leverage the concept drift to evaluate false memory in Sec.~\ref{sec:proposed_method:false_memory_evaluation}.

\paragraph{Discussion.}
Evaluating false memory by counterfactual reasoning offers several advantages.
Agents are restricted to their specific training environments, and it is significantly expensive to evaluate every possible scenario. The counterfactual mechanism addresses the cost of realistic experiments by reducing it to the measurement of concept drift.
Further, the agent concentrates on its latent concept before generating the answer~\citep{xie2021explanation, hendel2023context}.
Thus, it is possible to capture this concept before the answer generation happens.

\subsection{Geometry of Concept Drift}
\label{sec:proposed_method:geometry_concept_drift}
\paragraph{Concept approximation via hidden states.}
Realizing counterfactual reasoning requires an approximation of the concept. We show that the agent hidden state is sufficiently representative to capture the latent concept stated by Thm.~\ref{thm:concept_approximation_hiddenstate} as follows (proof in App.~\ref{app:theoretical_analysis:concept_approx_hiddenstate_proof}).

\begin{theorem}[Concept Approximation via Hidden State]
\label{thm:concept_approximation_hiddenstate}
    Suppose the demonstrations $M$ sufficiently represent a consistent concept $\theta_M$, and the residual-stream hidden state at the answer-commit position is sufficient for the model predictive distribution. 
    Then the hidden state $h(M+q)$ provides an approximate representation of the predictive concept induced by $M$ via an encoder $D_l$:
    \vspace{-1mm}
    \begin{equation}
        D_l(h(M+q)) \approx p(\cdot \mid \theta_M,q),
    \vspace{-1mm}
    \end{equation}
    with error bounded by the demonstration representativeness and hidden-state sufficiency errors.
\end{theorem}

Intuitively, during generation, the transformer encodes information in the hidden states of the residual stream, which are subsequently used by the decoder to produce the output. Consequently, Thm.~\ref{thm:concept_approximation_hiddenstate} tells us that the hidden state can be viewed as preserving a sufficient statistic for the model predictive distribution. Extracting representations from the residual stream therefore provides an approximation of the latent concepts induced by the model, which is confirmed by prior works~\citep{hendel2023context, todd2024function}.

\paragraph{Concept drift.}
As a result, the ETT concept drift in the counterfactual reasoning (Eq.~\eqref{eq:concept_drift_ett}) can be realized as
\vspace{-3mm}
\begin{equation}
\label{eq:concept_drift}
\Delta_{\theta}(q') = \frac{1}{N}\sum_{i=1}^{N} \big[ h(M_{-i} + q') - h(M_{-i} + q_i) \big],
\vspace{-1mm}
\end{equation}
where $h$ extracts the hidden state, $M_{-i}$ represents the leave-one-out on the memory, and $q_i$ is the left query from the leave-one-out, playing the role of the factual query $q_0=q_i$ in Thm.~\ref{thm:concept_drift_identifiability_cf}.
The first term characterizes the drifted concept when encountering a counterfactual query.
The second term characterizes the normal variability of the memory concept.
Thus, Eq.~\eqref{eq:concept_drift} anchors the original concept at the memory and characterizes its drift toward a specific region in the latent concept space.

Each counterfactual query establishes a region of expected concept behavior to evaluate false memory.
Let $\Theta$ be the latent space of concepts, and let $\mathcal{A}(q') \subseteq \Theta$ be the admissible space characterized by an individual counterfactual query. $\mathcal{A}$ is flexibly defined depending on scenarios.

\paragraph{Concept robustness versus adaptation.}
In such cases where we expect a robust concept to the change, e.g., retaining the driving skill under environmental changes, we expect the drifted concept to stay within the reference cloud of the memory concept $\Theta_M = \{h(M_{-i} + q_i)\}_{i=1}^N$ (intuitively visualized in Fig.~\ref{fig:main}), thus $\mathcal{A} = \Theta_M$. 
Conversely, when we expect a concept to adapt to a new scenario, i.e., asking the definition of ``X'' when the context changes from academia to social media networks, we wish the drifted concept adapts to the new context and moves \textit{outside} the reference cloud, thus $\mathcal{A} = \Theta \backslash \Theta_M$, where $\Theta \backslash \Theta_M$ is the region outside of $M$ following a specific adaptation direction set up by the counterfactual query (see the concept drift direction in Fig.~\ref{fig:main}). As a result, false-memory evaluation reduces to determining the drifted concept with respect to the boundary characterized by $\mathcal{A}$. We show in the next section the realization of such a boundary.

\subsection{False Memory Evaluation}
\label{sec:proposed_method:false_memory_evaluation}
We first demonstrate that obtaining concept drift via hidden states before generating an answer is sufficient to evaluate false memory. We initiate the following assumptions.

\begin{assumption}[Answer Readout]
\label{asm:answer_readout}
    For a binary objective with answer tokens $Y^{(0)}$ and $Y^{(1)}$, the difference in their logits is linear in the answer-commit hidden state,
    $\ell(\rho) = h(\rho)^\top \bar{g}_O + c_O$,
    where $\rho$ is the input prompt and $\bar{g}_O = w_{Y^{(1)}} - w_{Y^{(0)}}$ is the corresponding language-model-head direction and $c_O$ is prompt-independent.
\end{assumption}
\begin{assumption}[Coherent Memory]
\label{asm:coherent_memory}
    The retrieved memory induces a consistent concept with the committed answer, i.e., $y_M:=\mathbbm{1} [\ell(M_{-i}+q_i)>\tau_{M}]$ for a decision threshold $\tau_M$ constant over $i$.
\end{assumption}

Both Asm.~\ref{asm:answer_readout} and~\ref{asm:coherent_memory} are conventional in modern autonomous agents. The former follows the standard decoder language model. The latter is guaranteed since memory retrieval per task ensures consistency. 
Consequently, a concept drift preserves a linear relationship with the generated answer.

\begin{theorem}[Answer-Free Identifiability via Concept Drift]
\label{thm:answer_free}
    Under Asm.~\ref{asm:answer_readout}, and assuming that the hidden state approximates the induced concept, the effect of a counterfactual query on the agent answer preference is determined by the projection of the hidden-state drift onto the answer direction,
    \begin{equation}
        \Delta_\ell := \mathbb E_i  
        \left[  \ell(M_{-i}+q') - \ell(M_{-i}+q_i)  \right]
        = \left\langle \Delta_\theta,\bar g_O \right\rangle .  
    \vspace{-2mm}
    \end{equation}
\end{theorem}

Thm.~\ref{thm:answer_free} converts latent concept drift into an answer-level signal without generating an answer (proof in App.~\ref{app:theoretical_analysis:answer_free_proof}). It shows that when answer preference is readable from the hidden state, the difference in answer preference between the memory and counterfactual condition is captured by the hidden-state concept drift along the answer-readout direction $\bar g_O$. Thus, projecting $\Delta_\theta$ onto $\bar g_O$ identifies whether the counterfactual causes the memory-induced concept move toward a different committed answer, and hence provides a direct signal for identifying false memory.

\begin{corollary}[Boundary Interpretation for False Memory]
\label{cor:boundary}
    Under Asm.~~\ref{asm:coherent_memory} and the conditions of Thm.~\ref{thm:answer_free}, the boundary to identify false memory is the projection of admissible concept space $\mathcal{A}$ into the readout space $\bar{g}_O$,
    \vspace{-2mm}
    \begin{equation}
        \mathcal{B}_O = \{\theta: f_{\mathcal{A} \to \bar{g}_O}(\theta) = \tau\}.
    \vspace{-1mm}
    \end{equation}
    Therefore, 
    \circled{1} Robustness expectation ($\mathcal{A} = \Theta_M$): false memory $\iff$ $\Delta_{\theta}$ passes the bound $\mathcal{B}_O$ established by the projection of $\Theta_M$; 
    \circled{2} Adaptation expectation ($\mathcal{A} = \Theta \backslash \Theta_M$): false memory $\iff$ $\Delta_{\theta}$ stays within the bound $\mathcal{B}_O$ established by the projection of the adaptation direction from the memory to the counterfactual query.
\end{corollary}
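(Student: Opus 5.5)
The plan is to reduce Cor.~\ref{cor:boundary} to a one-dimensional threshold comparison along the readout direction $\bar g_O$, using Thm.~\ref{thm:answer_free} and Asm.~\ref{asm:coherent_memory}.

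\textbf{Step 1: a scalar criterion.} By Asm.~\ref{asm:answer_readout}, the committed answer for any prompt $\rho$ is $\mathbbm{1}[\ell(\rho)>\tau_M]$, where $\ell(\rho)=h(\rho)^\top\bar g_O+c_O$. By Asm.~\ref{asm:coherent_memory}, every reference prompt $M_{-i}+q_i$ lies on the same side of $\tau_M$ and yields the same label $y_M$. Hence the projected reference cloud
\[
\{\langle h(M_{-i}+q_i),\bar g_O\rangle\}_{i=1}^N
\]
is contained in a half-line, bounded by $\tau:=\tau_M-c_O$. I would define $f_{\mathcal{A}\to\bar g_O}(\theta)=\langle\theta,\bar g_O\rangle$, restricted to the relevant region, so that $\mathcal{B}_O$ is the hyperplane $\{\theta:\langle\theta,\bar g_O\rangle=\tau\}$.

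\textbf{Step 2: translate the drift into a label change.} Thm.~\ref{thm:answer_free} gives $\Delta_\ell=\langle\Delta_\theta,\bar g_O\rangle$. Adding this to the mean reference logit $\bar\ell_M=\mathbb{E}_i\,\ell(M_{-i}+q_i)$ gives the mean counterfactual logit $\mathbb{E}_i\,\ell(M_{-i}+q')=\bar\ell_M+\langle\Delta_\theta,\bar g_O\rangle$. The counterfactual committed answer therefore differs from $y_M$ exactly when the anchored drift $\bar\theta_M+\Delta_\theta$ crosses $\mathcal{B}_O$, i.e.\ when $\langle\Delta_\theta,\bar g_O\rangle$ passes the margin $\tau-\langle\bar\theta_M,\bar g_O\rangle$.

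\textbf{Step 3: the two cases.} In case \circled{1}, false memory means the counterfactual leaves the admissible set $\Theta_M$. Projected onto $\bar g_O$, that set is the half-line carrying label $y_M$, so leaving it is exactly crossing $\mathcal{B}_O$. In case \circled{2}, the admissible set is $\Theta\backslash\Theta_M$ along the adaptation direction. Faithfulness requires the label to flip in the direction set by $q'$, so false memory is exactly failing to cross, i.e.\ the drift stays within $\mathcal{B}_O$. Both equivalences follow from Step 2 together with monotonicity of the projection.

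\textbf{Main obstacle.} Step 2 mixes an averaged quantity with per-prompt decisions. Linearity makes the averages exact, but the label of the average need not equal the average of the labels. I would resolve this by stating the criterion at the level of the mean counterfactual logit, which is what $\Delta_\theta$ controls. If needed, I would add a margin assumption that the per-$i$ logits concentrate around their mean, so that crossing $\mathcal{B}_O$ in mean corresponds to a label change for most $i$.

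A second subtlety is that ``false memory'' is defined through $\mathcal{A}$ in the full concept space, while the equivalence only sees its projection. I would make explicit that the corollary identifies false memory \emph{as read out along} $\bar g_O$. That is, I would take the projected criterion as the operational definition, which is consistent with the approximation in Thm.~\ref{thm:concept_approximation_hiddenstate}.
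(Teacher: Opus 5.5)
The paper gives no separate proof of this corollary. Its justification is the remark after Thm.~\ref{thm:answer_free}, that projecting $\Delta_\theta$ onto $\bar g_O$ shows whether the counterfactual moves the memory concept toward a different committed answer, plus the Discussion paragraph after the corollary. Your reduction is that reasoning made precise: Thm.~\ref{thm:answer_free} and Asm.~\ref{asm:coherent_memory} give a one-dimensional threshold along $\bar g_O$, and the two cases differ only in which side counts as faithful. You also handle the averaging issue, which the paper glosses over, by stating the criterion for the mean counterfactual logit $\mathbb{E}_i\,\ell(M_{-i}+q')$; this is what Alg.~\ref{alg:method} actually thresholds. For the adaptation case your argument is sound and matches the algorithm's test $\bar g_O^\top\Delta_\theta<\tau$.

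The robustness case has a genuine gap. In Step 1 you correctly say the projected reference cloud $\{\langle h(M_{-i}+q_i),\bar g_O\rangle\}_{i=1}^N$ is \emph{contained in} the half-line on the $y_M$ side of $\tau$. In Step 3 you then use that it \emph{is} that half-line. With $\mathcal{A}=\Theta_M=\{h(M_{-i}+q_i)\}_{i=1}^N$ as the paper defines it, the projection is a bounded set whose hull is $[\min_i,\max_i]$. A drift can therefore leave $\mathcal{A}$ without crossing your hyperplane in two ways: it can exit that interval on the $y_M$ side (the concept becomes even more committed to the memory answer), or it can move far from the cloud orthogonally to $\bar g_O$. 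Your equivalence thus holds only after you replace $\Theta_M$ by the decision half-space $\{\theta:\mathbbm{1}[\langle\theta,\bar g_O\rangle>\tau]=y_M\}$, and then the robustness case is true by definition (false memory $:=$ label flip).

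This also departs from the paper's intended geometry. The paper calls the robustness boundary full-rank around the memory and implements it as $\Delta_\theta^\top\bar g_O^{-1}\Delta_\theta>\tau$ with $\bar g_O=\mathrm{Cov}(\Delta_\theta)$. That boundary is two-sided in every direction, and it cannot follow from Thm.~\ref{thm:answer_free}, which only controls the component of $\Delta_\theta$ along $\bar g_O$.

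To close the step, state which version you prove. One option is to take $\mathcal{A}$ in the robustness case to be the label-$y_M$ half-space; your proof is then complete, but it is an answer-level statement. The other is to keep the cloud or ellipsoid boundary and say that it is an extra modeling choice not implied by Asm.~\ref{asm:answer_readout} and Asm.~\ref{asm:coherent_memory}. Under that choice, crossing your hyperplane implies leaving the convex hull of the cloud, but not conversely, so the clean ``iff'' is lost.
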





\paragraph{Discussion.}
The intuitive geometry underlying the boundary is flexible across scenarios. 
In 
\circled{2},
the projected boundary is defined by the behavioral difference induced by the memory and its counterfactual. Accordingly, $\bar{g}_O$ can be implemented as the difference between the two answer-token vectors in the unembedding space, or as the hidden state of queries that represent the direction from the memory toward the counterfactual (concept drift intuitively represented by an arrow in Fig.~\ref{fig:main}). 
In 
\circled{1},
the boundary region is full-rank and distributed around the memory. Thus, $\bar{g}_O$ can be defined using the covariance matrix of the whitened distance between the memory and the counterfactual \citep{lee2018simple}, which intuitively characterizes the uncertainty region surrounding the memory (reference cloud of memory concept visualized in Fig.~\ref{fig:main}). Alg.~\ref{alg:method} implements FAME in detail. 
Thus, Cor.~\ref{cor:boundary} generalizes to both realistic scenarios of false memory.

\vspace{-2mm}
\section{Taxonomy of False Memory}
\label{sec:taxonomy_fm}
\vspace{-2mm}
\begin{wrapfigure}{r}{0.35\textwidth}
    \vspace{-8mm} 
    \begin{minipage}{0.35\textwidth}
    \begin{algorithm}[H] 
        \scriptsize
        \caption{\small FAME}
        \label{alg:method}
        \begin{algorithmic}
            \STATE \textbf{Input:} Memory $M$, threshold $\tau$.
            \STATE Construct counterfactual queries $Q'$
            \FOR{each ${q'} \in Q'$}
                \STATE Compute concept drift \STATE $\Delta_\theta = \frac{1}{N}\sum_{q_i\in M}
[h(M_{-i}+q')-h(M_{-i}+q_i)]$
                \IF{robustness expected}
                    \STATE $\bar{g}_O = Cov(\Delta_{\theta})$
                    \STATE $\mathcal{F}_M(q') = \mathbbm{1}\left[ \Delta_{\theta}^\top \bar{g}_O^{-1} \Delta_{\theta} > \tau \right]$
                \ELSIF{adaptation expected}
                    \STATE $\bar{g}_O = w_{Y^{(q')}} - w_{Y^{(M)}}$
                    \STATE $\mathcal{F}_{M}(q') = \mathbbm{1} \left[ \bar{g}_O^\top \Delta_{\theta} < \tau \right]$
                \ENDIF
            \ENDFOR
            \STATE \textbf{Output:} $\mathcal{F}_M$
        \end{algorithmic}
    \end{algorithm}
    \end{minipage}
    \vspace{-7mm} 
\end{wrapfigure}
We present the following taxonomy of false memory in practice, and subsequently demonstrate how counterfactual queries can be constructed accordingly.

\begin{figure}
    \centering
    \includegraphics[width=\linewidth]{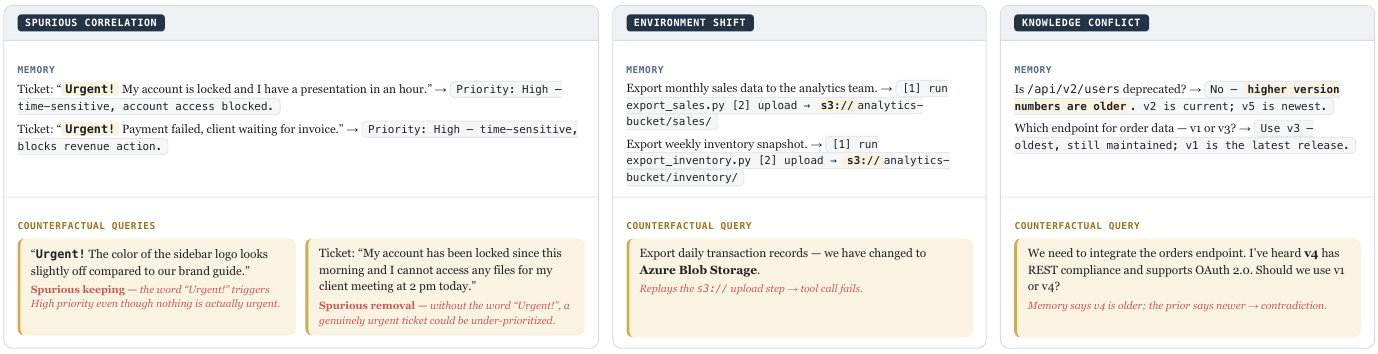}
    \vspace{-4mm}
    \caption{\small False-memory taxonomy in practice and examples of constructing counterfactual queries.}
    \label{fig:fm_taxonomy}
    \vspace{-6mm}
\end{figure}

\vspace{-3mm}
\paragraph{Spurious correlation.}
Spurious correlation occurs when a memory is entangled with spurious features, e.g., repetitive vocab in the memory.
Fig.~\ref{fig:fm_taxonomy} (top-left) shows an example where an agent could assign \textit{high priority} when seeing the word ``Urgent!''  rather than incidents that block users.
The agent thus has a tendency to perform shortcut learning as observed in previous studies.

A robust memory to spurious correlations requires consistency across two different settings: spurious keeping, which preserves the spurious feature while changing the context, thus requiring the agent to adapt; and spurious removal, which eliminates or changes the spurious feature, thus requiring the agent to be robust. As a result, a counterfactual query can be constructed as
$
    q' = T_{sc}(s(M), c(q))
$,
where $s$ identifies spurious features from the memory $M$, and $c$ selects a query $q \sim M$ to alter the context. One way to define $s$ in practice is to filter repetitive words as spurious features. In spurious keeping, $T_{sc}$ keeps these features and changes the context. In spurious removal, it omits these features while retaining the context (see Fig.~\ref{fig:fm_taxonomy} (bottom-left)).

\vspace{-3mm}
\paragraph{Environment shift. }
Skills developed by an agent could entangle with specific training environments and are unable to apply under environment shifts.
Fig.~\ref{fig:fm_taxonomy} (top-mid) shows a realistic example of a memory-developing S3 storage service that has to change to an Azure environment.
Unlike spurious correlations caused by entanglement with surface features, environments are often implicit factors while the objective is often unchanged. Thus, it is challenging for an agent to leverage latent skills in different environments and may suffer from false memory. A transformation of the environment constructs its counterfactual query,
$
    q' = T_{env}(e(M))
$,
where $e$ is the function to extract the environment, and $T_{env}$ adjust the environment extracted from $e$ (see Fig.~\ref{fig:fm_taxonomy} (bottom-mid)).
Thus, we expect the latent concept induced by the memory to be robust to changes in environments.

\vspace{-3mm}
\paragraph{Knowledge conflict.}
An agent memory could preserve knowledge that disagrees with its prior knowledge.
This scenario is realistic, especially in Retrieval-Augmented Generation systems for policy and documentation. Fig.~\ref{fig:fm_taxonomy} (top-right) shows an unusual naming convention in software engineering, where greater numbers indicate older versions.
Counterfactual queries are thus constructed to attract the 
inducement of prior knowledge concept from the memory,
$
    q' = T_{kc}(\kappa(M))
$,
where $T_{kc}$ transforms the knowledge $\kappa$ induced by the memory, e.g., rules or policies, to increase its attraction toward the prior knowledge (see Fig.~\ref{fig:fm_taxonomy} (bottom-right)).
In such a knowledge conflict, we expect the latent concept to remain stable within the region induced by the memory.


\section{Empirical Evaluations}
\label{sec:emp_eval}

\vspace{-2mm}
\subsection{Experimental Setup}

\vspace{-2mm}
\paragraph{False-memory settings.}
We evaluate on three false-memory settings:
\textit{(1) Spurious correlation (SC)} in sentiment analysis of the Rotten Tomatoes dataset, where we attach a tag [!] to every positive review in the memory. Spurious-keeping counterfactual queries attach [!] to negative reviews, and spurious-removal queries remove [!] but keep the positive reviews. An agent prone to false memory will assign any reviews with [!] as positive. We also set up the vice versa experiment;
\textit{(2) Environment shift (ES)} keeps the memory of film reviews, but counterfactual queries are changed to tweets. The sentiment analysis objective remains unchanged. A robust agent should develop a sentiment analysis skill independent of domains;
\textit{(3) Knowledge conflict (KC)}, we construct the memory to induce an unusual versioning convention, i.e., larger numbers mean older versions. Counterfactual queries ask whether the higher-numbered version is newer or older given version pairs.

\vspace{-3mm}
\paragraph{Implementation \& Metrics.}
We construct 40 (SC) and 30 (ES, KC) memory sets. Each set contains 6 demonstrations. We evaluate 3 (SC) and 4 (ES, KC) counterfactual queries per memory set.
The reference projection $\bar{g}_O$ is characterized from the expected answers of true vs. false memory, i.e., negative/positive in SC and ES, newer/older in KC. 
We leverage Llama-3B in the experiments. We obtain the hidden states at the 14\textsuperscript{th} layer following prior works \citep{tran2026bifrost, zou2023representation, turner2023steering} (further justification is provided in App.~\ref{app:experiments:layer_choice}).
Our main evaluation is AUROC, representing the effectiveness of identifying false memory, which is computed between predicted labels and false memory ground truths.

\vspace{-3mm}
\paragraph{Baselines.}
We compare FAME with the following baselines: \textit{Cloud distance}, measuring the normalized magnitude of the concept drift anchored in the memory as the false memory signal;
\textit{Task vector} \citep{hendel2023context}, measuring the distance in the concept space between memory and counterfactual queries;
\textit{Surface}, extracting the latent concept at the first layer;
\textit{Input similarity} \citep{xiong2026memory}, measuring the similarity of sentence embeddings between the memory and counterfactual queries;
\textit{Logit confidence}, obtaining logits to identify false memory.



\vspace{-3mm}
\subsection{Main Results \& Further Analysis}

\begin{table}[t]
    \centering
    \small

    \begin{minipage}[t]{0.57\textwidth}
        \vspace{0pt}
        \centering

        \captionsetup{font=small,skip=3pt}
        \captionof{table}{Effectiveness of false memory evaluation of FAME vs. baselines, measured by AUROC$\uparrow$. Best results are in \textbf{bold}.}
        \label{tab:main_emp_eval}

        \resizebox{\linewidth}{!}{
        \begin{tabular}{lcccc}
            \toprule
            \textbf{Method} &
            \textbf{Spurious} &
            \textbf{Spurious} &
            \textbf{Environment} &
            \textbf{Knowledge} \\
            & \textbf{Keeping} & \textbf{Removal} &
            \textbf{Shift} & \textbf{Conflict} \\
            \midrule
            Cloud distance   & 0.078 & 0.757 & 0.788 & 0.612 \\
            Task vector      & 0.334 & 0.639 & 0.460 & 0.652 \\
            Surface          & 0.742 & 0.746 & 0.573 & 0.635 \\
            Input similarity & 0.782 & 0.751 & 0.599 & 0.593 \\
            Logit confidence & 0.499 & 0.681 & 0.613 & 0.589 \\
            \midrule
            \textbf{FAME (Ours)}
            & \textbf{0.922} & \textbf{0.966} & \textbf{0.869} & \textbf{0.762} \\
            \bottomrule
        \end{tabular}
        }

        \vspace{4mm}

        \includegraphics[width=0.24\linewidth]{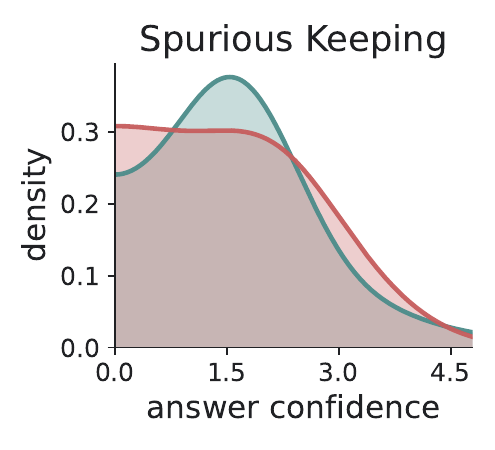}
        \hfill
        \includegraphics[width=0.24\linewidth]{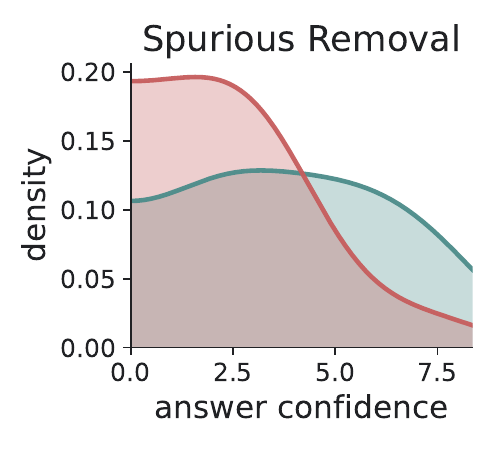}
        \hfill
        \includegraphics[width=0.24\linewidth]{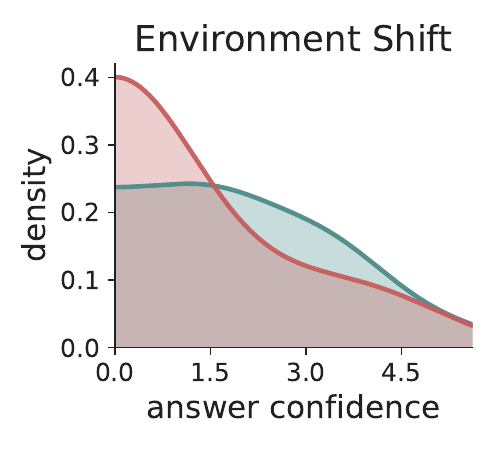}
        \hfill
        \includegraphics[width=0.24\linewidth]{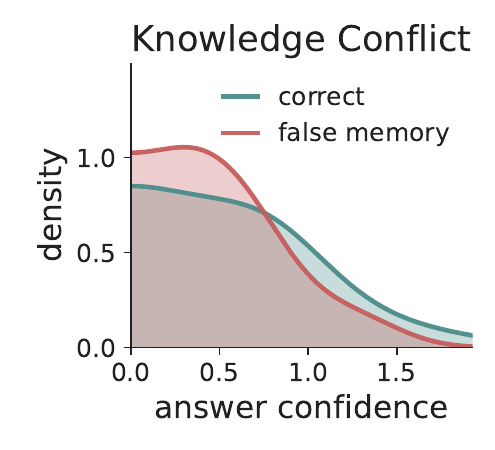}

        \vspace{-1mm}

        \captionsetup{font=small,skip=2pt}
        \captionof{figure}{Answers alone are incapable of evaluating false memory: overlapping answer confidence distributions between correct and false memory answers make it challenging to identify false memory.}
        \label{fig:confidence_dist}
    \end{minipage}
    \hfill
    \begin{minipage}[t]{0.40\textwidth}
        \vspace{-2mm}
        \centering

        \includegraphics[width=0.48\linewidth]{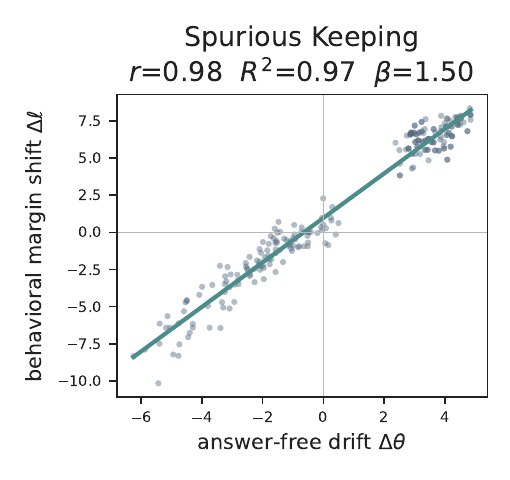}
        \hfill
        \includegraphics[width=0.48\linewidth]{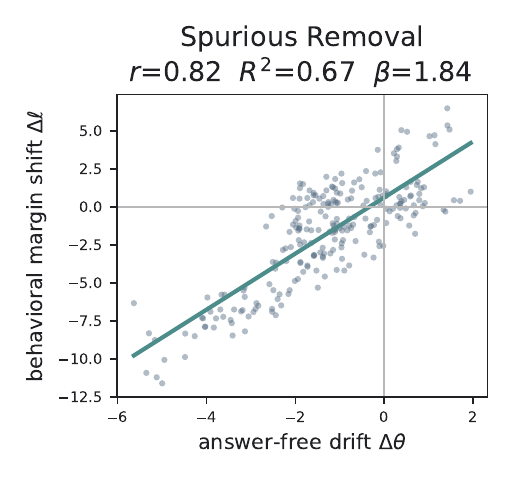}

        \vspace{1mm}

        \includegraphics[width=0.48\linewidth]{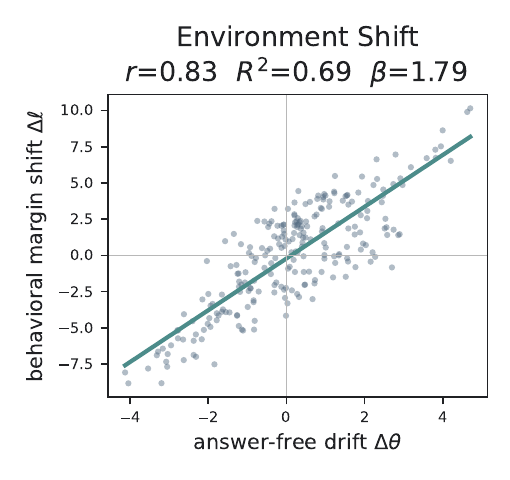}
        \hfill
        \includegraphics[width=0.48\linewidth]{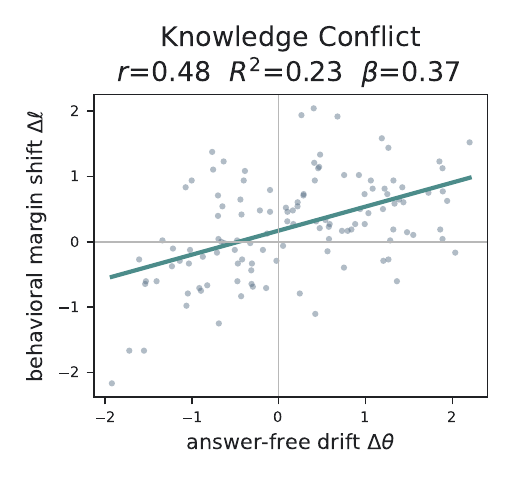}

        \vspace{-1mm}

        \captionsetup{font=small,skip=2pt}
        \captionof{figure}{Answer-free false-memory evaluation: substantial correlations between concept drift and behavioral margin shift. Concept drift in hidden states linearly corresponds to change in logit space.}
        \label{fig:coupling_scatter}
    \end{minipage}
    \vspace{-6mm}
\end{table}

\vspace{-2mm}
\paragraph{FAME outperforming baselines.}
\begin{wrapfigure}{r}{0.25\textwidth}
    \vspace{-7mm}
    \includegraphics[width=0.9\linewidth]{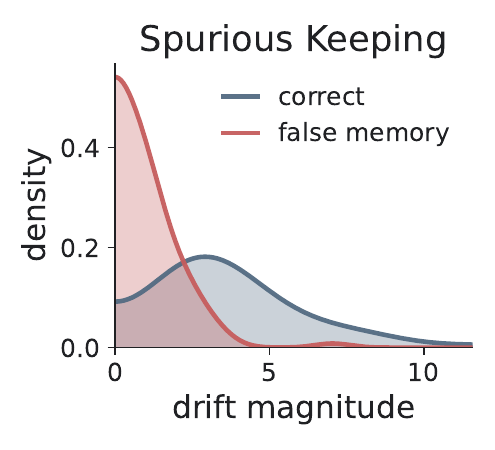}
    \vspace{-4mm}
    \caption{\small Small drift yet crossing the boundary still signals false memory.}
    \vspace{-6mm}
    \label{fig:magnitude_dist_keeping}
\end{wrapfigure}
Tab.~\ref{tab:main_emp_eval} shows that FAME outperforms all baselines in identifying false memory. 
Its AUROCs across categories are substantially high, especially 0.87-0.97 in spurious keeping, removal, and environment shift. 
FAME consistently surpasses the best-performing baseline over categories, i.e., over 14\% on spurious keeping, 20\% on spurious removal, 8\% on environment shift, and 11\% on knowledge conflict.
This confirms the effectiveness of FAME over baselines across false-memory categories.

\vspace{-3mm}
\paragraph{Large concept drift does not necessarily imply false memory.}
Our results show that the magnitude of concept drift alone does not reliably indicate false memory when the drift does not cross the boundary. 
Tab.~\ref{tab:main_emp_eval} shows that cloud distance and task vectors, both of which rely on drift magnitude, consistently underperform FAME across settings, especially 0.078 for cloud distance in spurious keeping.
This observation is further supported by Fig.~\ref{fig:magnitude_dist_keeping}, which summarizes the signal of false memory by concept drift magnitude. The figure shows that smaller drifts that cross the boundary can provide a stronger signal of false memory than larger drifts that remain within the correct region. Thus, drift magnitude itself is insufficient to determine false memory. Whether the drift crosses the relevant boundary is more informative.

\vspace{-3mm}
\paragraph{Answer-free guarantee.}
We show that false-memory evaluation can be achieved before answers are generated.
Fig.~\ref{fig:coupling_scatter} shows a strong correlation between concept drift $\Delta_{\theta}$ and behavioral margin $\Delta_{\ell}$ across categories. Thus, such a behavior change when encountering a counterfactual query useful to identify false memory is linearly encoded in the hidden state represented by the concept drift. This empirically supports Thm.~\ref{thm:answer_free} and demonstrates that false memory can be evaluated without generating answers.

\vspace{-3mm}
\paragraph{Answers alone cannot identify false memory.}


We reveal an interesting result that observing generated answers alone is incapable of identifying false memory. Fig.~\ref{fig:confidence_dist} compares confidence distributions between false-memory and correct answers, which are overall overlapping to the left. If observing answers can clearly determine false memory versus correction, we should observe a distinguishable pattern: the former should concentrate on the left and 
the latter should concentrate on the right, i.e., low-vs-high confidence. This confirms that the answer distribution barely contains signals to identify false memory. Case studies in App.~\ref{app:further_analysis:case_studies} further demonstrate these results.


\section{Real-World Evaluations}
\label{sec:real_eval}

\begin{table}[t]
    \centering
    \caption{\small Effectiveness of false-memory evaluation in realistic benchmarks (AUROC$\uparrow$). Best results are in \textbf{bold}.}
    \begin{tabular}{lcccccc}
            \toprule
            & \multicolumn{3}{c}{\textbf{GSM-Symbolic}}
            & \textbf{Gitchameleon}
            & \multicolumn{2}{c}{\textbf{BigBench-Hard}} \\
            \cmidrule(lr){2-4}
            \cmidrule(lr){6-7}
            \textbf{Method}
            & SK & SR & ES
            & KC
            & SK & SR \\
            \midrule
            Input similarity
                & 0.464 & 0.705 & 0.797
                & 0.621
                & 0.230 & 0.636 \\
            Surface
                & 0.553 & 0.602 & 0.826
                & 0.564
                & 0.437 & 0.634 \\
            Logit confidence
                & 0.530 & 0.372 & 0.539
                & 0.452
                & 0.669 & 0.762 \\
            Task vector
                & 0.485 & 0.289 & 0.651
                & 0.564
                & 0.475 & 0.417 \\
            Function vector
                & 0.595 & 0.660 & 0.781
                & 0.665
                & 0.576 & 0.837 \\
            EigenScore
                & 0.668 & 0.696 & 0.838
                & 0.546
                & 0.494 & 0.828 \\
            ContextCite
                & 0.693 & 0.569 & 0.547
                & 0.494
                & 0.583 & 0.816 \\
            Entity-aware probe
                & 0.427 & 0.462 & 0.499
                & 0.582
                & 0.501 & 0.808 \\
            \midrule
            \textbf{FAME (Ours)}
                & \textbf{0.786}
                & \textbf{0.790}
                & \textbf{0.860}
                & \textbf{0.716}
                & \textbf{0.815}
                & \textbf{0.861} \\
            \bottomrule
        \end{tabular}
        \\
        \vspace{1mm}
        \scriptsize SK: Spurious Keeping; SR: Spurious Removal; ES: Environment Shift; KC: Knowledge Conflict.
    \label{tab:main_real_eval}
    \vspace{-4mm}
\end{table}

\subsection{Benchmarks \& Implementation Details}

\paragraph{Benchmarks \& Baselines.}
We evaluate FAME on three benchmarks prone to false memory in practice:
\textit{GSM-Symbolic (GSM)} \citep{mirzadeh2025gsm} evaluates GSM math problem solving on spurious correlation, e.g., an agent mistakenly attaches ``in total'' as addition, and environment shift, e.g., sensitive to large number scale;
\textit{GitChameleon-2.0 (Git)} \citep{misra2026gitchameleon} evaluates knowledge conflict, where an agent has to maintain specific library versions induced by the memory rather than its prior;
and \textit{BigBench-Hard (BBH)} \citep{suzgun2023challenging} evaluates spurious correlation on complex reasoning, including tasks of boolean expression, date understanding, logical deduction, and tracking shuffled objects (more details in App.~\ref{app:experiments:setup}). 
The implementation and the evaluation metric follow Sec.~\ref{sec:emp_eval}.
We compare FAME with the following baselines: Input similarity \citep{xiong2026memory}, Surface, Logit confidence, Task vector \citep{hendel2023context}, Function vector \citep{todd2024function}, EigenScore from INSIDE \citep{chen2024inside}, ContextCite \citep{cohen2024contextcite}, Entity-aware probe \citep{zhao2026understanding}.

\vspace{-2mm}
\paragraph{Grading evaluation.}
We design counterfactual queries across five strength levels. Lower levels use natural, context-native counterfactuals, while higher levels add phrases that draw the agent’s attention and induce false memories. This grading enables diverse evaluations while maintaining a balanced faithful vs. false distribution.

\vspace{-2mm}
\paragraph{Counterfactual templates.}
As a result, we curate and release a suite of templates including memories and counterfactual queries across the three benchmarks. Specifically, GSM contains 40 templates for spurious correlation and environment shift, Git contains 18 templates for knowledge conflict, and BBH contains 11 templates for the mentioned tasks. In our experiments, we evaluate each template with 5-level grades. Each level contains 20 (BBH) or 50 (GSM, Git) counterfactual queries. Examples of these counterfactual templates are presented in App.~\ref{app:counterfactual_templates}.



\subsection{Main Results \& Further Analysis}

\vspace{-2mm}
\paragraph{Consistent outperformance of FAME across datasets and evaluation settings.}
Tab.~\ref{tab:main_real_eval} shows that FAME outperforms all baselines.
On GSM, FAME achieves the best performance in spurious keeping (0.786), spurious removal (0.790), and environment shift (0.860). 
It also obtains the highest scores on knowledge conflict in Git (0.716) and both spurious keeping (0.815) and spurious removal (0.861) in BBH. 
Notably, the gains are particularly substantial in challenging settings such as BBH spurious keeping, where FAME improves over Input similarity by 58.5\%. 
These results demonstrate that our method provides a more reliable mechanism for evaluating false memory across diverse scenarios in practice. Further analysis on realistic benchmarks is deferred to App.~\ref{app:further_analysis}

\vspace{-3mm}
\paragraph{Case study analysis.}
\begin{figure}[t]
    \centering
    \includegraphics[width=\linewidth]{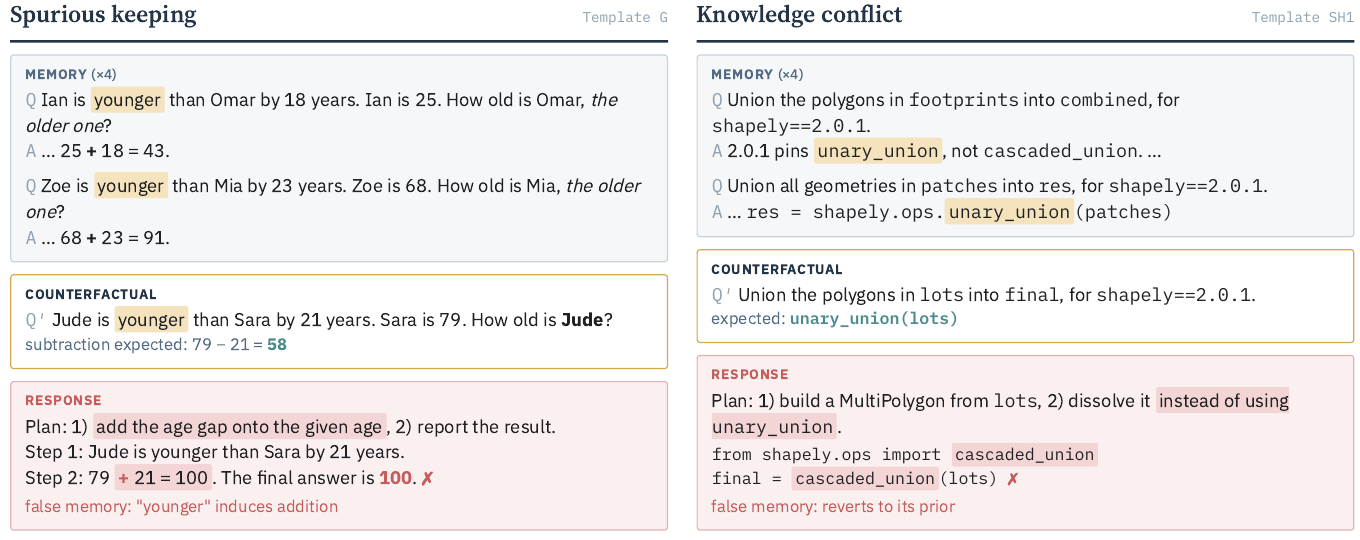}
    \caption{Two case studies of false memory in practice that FAME successfully evaluates.}
    \label{fig:fm_casestudies}
    \vspace{-2mm}
\end{figure}
FAME correctly identifies false memory in practice. Fig.~\ref{fig:fm_casestudies} shows two realistic examples of false memory. 
Note that responses are generated for validation only. 
In spurious keeping, the agent attached ``younger'' to the addition operation in the memory and failed to counterfactual query including the term but requires subtraction.
In knowledge conflict, the agent failed to maintain the use of \texttt{unary\_union} in the memory for the counterfactual query, thereby reverting to its prior use of \texttt{cascaded\_union}.
Both cases reveal the success of FAME in evaluating false memory given only counterfactual queries before answers are generated. (More results in App.~\ref{app:further_analysis:case_studies}).

\vspace{-3mm}
\paragraph{More demonstrations concentrate more on memory concept.}
\begin{wrapfigure}{r}{0.5\textwidth}
    \vspace{-4mm}
    \centering
    \includegraphics[width=0.49\linewidth]{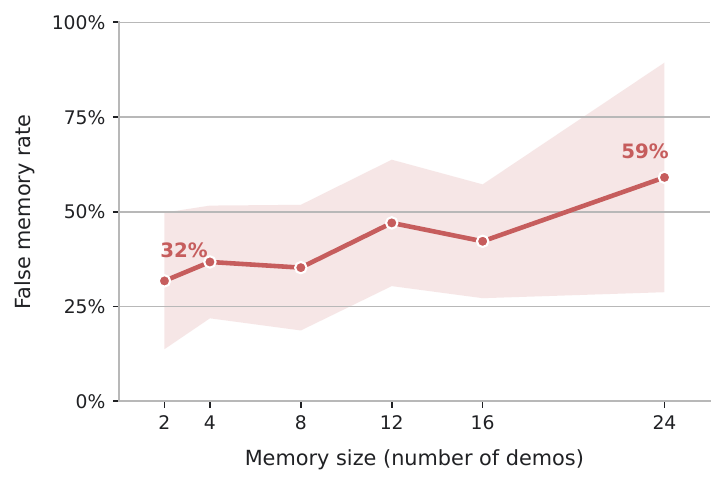}
    \hfill
    \includegraphics[width=0.49\linewidth]{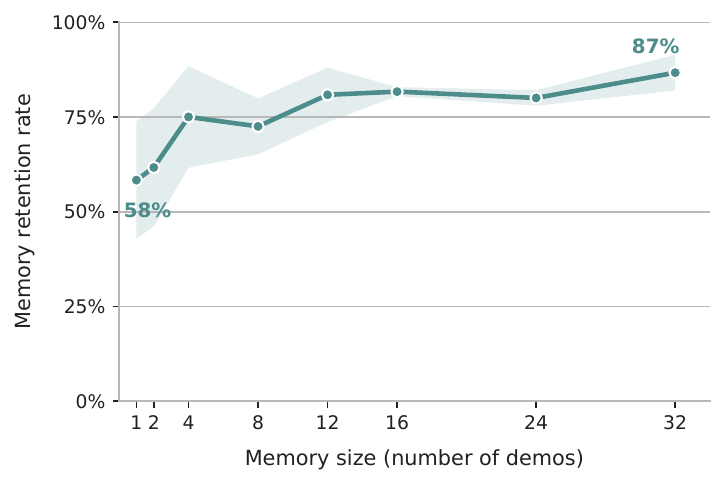}
    \vspace{-2mm}
    \caption{\small Memory size vs. false-memory rate (spurious-keeping, GSM) and vs. retention rate (knowledge conflict, Git).}
    \label{fig:mem_size_effect}
    \vspace{-3mm}
\end{wrapfigure}
We observed that adding more demos will concentrate more on the memory concept. 
This results in an increase in false memory in spurious-keeping on Fig.~\ref{fig:mem_size_effect} left, where the agent refuses to adapt as the memory concept is concentrated. A similar observation in knowledge conflict is shown on the right, where the memory retention rate increases with memory size. These observations reveal an important note: if the memory induces a wrong belief in a context, retrieving more makes false memory worse, and vice versa.


\section{Conclusion}
We identify \textit{false memory} as an important challenge for autonomous agents, arising when previously acquired beliefs become misleading under spurious correlations, environment shifts, or knowledge conflicts. To address this problem, we propose FAME, a training-free framework that evaluates false memory through counterfactual reasoning measured by concept drift in agent hidden states, without answer generation, reward design, or fine-tuning. Experiments across controlled and realistic settings show that concept drift effectively identifies false memory, achieving AUROCs of 76.2\%--96.7\%, while revealing cases where false memory is not observable from surface answers. Together with our taxonomy and released counterfactual templates, these results establish a basis for studying false memory as an internal belief phenomenon in autonomous agents.
Although promising, FAME requires access to agent hidden states and counterfactual queries. Future work could explore automatic counterfactual generation, independent belief representations, and mitigation methods toward more trustworthy autonomous agents.

\clearpage

\clearpage
\bibliographystyle{unsrt}  
\bibliography{references}  

@article{madaan2023self,
  title={Self-refine: Iterative refinement with self-feedback},
  author={Madaan, Aman and Tandon, Niket and Gupta, Prakhar and Hallinan, Skyler and Gao, Luyu and Wiegreffe, Sarah and Alon, Uri and Dziri, Nouha and Prabhumoye, Shrimai and Yang, Yiming and others},
  journal={Advances in neural information processing systems},
  volume={36},
  pages={46534--46594},
  year={2023}
}

@inproceedings{zhao2024expel,
  title={Expel: Llm agents are experiential learners},
  author={Zhao, Andrew and Huang, Daniel and Xu, Quentin and Lin, Matthieu and Liu, Yong-Jin and Huang, Gao},
  booktitle={Proceedings of the AAAI Conference on Artificial Intelligence},
  volume={38},
  number={17},
  pages={19632--19642},
  year={2024}
}

@article{shinn2023reflexion,
  title={Reflexion: Language agents with verbal reinforcement learning},
  author={Shinn, Noah and Cassano, Federico and Gopinath, Ashwin and Narasimhan, Karthik and Yao, Shunyu},
  journal={Advances in Neural Information Processing Systems},
  volume={36},
  pages={8634--8652},
  year={2023}
}

@article{wang2023voyager,
  title={Voyager: An open-ended embodied agent with large language models},
  author={Wang, Guanzhi and Xie, Yuqi and Jiang, Yunfan and Mandlekar, Ajay and Xiao, Chaowei and Zhu, Yuke and Fan, Linxi and Anandkumar, Anima},
  journal={arXiv preprint arXiv:2305.16291},
  year={2023}
}

@article{packer2023memgpt,
  title={Memgpt: Towards llms as operating systems},
  author={Packer, Charles and Wooders, Sarah and Lin, Kevin and Fang, Vivian and Patil, Shishir G and Stoica, Ion and Gonzalez, Joseph E},
  journal={arXiv preprint arXiv:2310.08560},
  year={2023}
}

@article{wang2024agent,
  title={Agent workflow memory},
  author={Wang, Zora Zhiruo and Mao, Jiayuan and Fried, Daniel and Neubig, Graham},
  journal={arXiv preprint arXiv:2409.07429},
  year={2024}
}

@article{xu2026mem,
  title={A-mem: Agentic memory for llm agents},
  author={Xu, Wujiang and Liang, Zujie and Mei, Kai and Gao, Hang and Tan, Juntao and Zhang, Yongfeng},
  journal={Advances in Neural Information Processing Systems},
  volume={38},
  pages={17577--17604},
  year={2026}
}

@article{zhang2026g,
  title={G-memory: Tracing hierarchical memory for multi-agent systems},
  author={Zhang, Guibin and Fu, Muxin and Wang, Kun and Wan, Frank and Yu, Miao and Yan, Shuicheng},
  journal={Advances in Neural Information Processing Systems},
  volume={38},
  pages={12988--13018},
  year={2026}
}

@article{yang2024buffer,
  title={Buffer of thoughts: Thought-augmented reasoning with large language models},
  author={Yang, Ling and Yu, Zhaochen and Zhang, Tianjun and Cao, Shiyi and Xu, Minkai and Zhang, Wentao and Gonzalez, Joseph E and Cui, Bin},
  journal={Advances in Neural Information Processing Systems},
  volume={37},
  pages={113519--113544},
  year={2024}
}

@inproceedings{lingam2025enhancing,
  title={Enhancing language model agents using diversity of thoughts},
  author={Lingam, Vijay and Tehrani, Behrooz Omidvar and Sanghavi, Sujay and Gupta, Gaurav and Ghosh, Sayan and Liu, Linbo and Huan, Jun and Deoras, Anoop},
  booktitle={The Thirteenth International Conference on Learning Representations},
  year={2025}
}

@article{fu2025agentrefine,
  title={Agentrefine: Enhancing agent generalization through refinement tuning},
  author={Fu, Dayuan and He, Keqing and Wang, Yejie and Hong, Wentao and Gongque, Zhuoma and Zeng, Weihao and Wang, Wei and Wang, Jingang and Cai, Xunliang and Xu, Weiran},
  journal={arXiv preprint arXiv:2501.01702},
  year={2025}
}

@article{tran2026bifrost,
  title={Bifrost: Steering strategic trajectories to bridge contextual gaps for self-improving agents},
  author={Tran, Quan M and Huang, Zhuo and Zhang, Wenbin and Han, Bo and Yatani, Koji and Sugiyama, Masashi and Liu, Tongliang},
  journal={arXiv preprint arXiv:2602.05810},
  year={2026}
}

@article{pearl2009causal,
  title={Causal inference in statistics: An overview},
  author={Pearl, Judea},
  year={2009}
}

@article{lee2018simple,
  title={A simple unified framework for detecting out-of-distribution samples and adversarial attacks},
  author={Lee, Kimin and Lee, Kibok and Lee, Honglak and Shin, Jinwoo},
  journal={Advances in neural information processing systems},
  volume={31},
  year={2018}
}

@inproceedings{xiong2026memory,
  title={How memory management impacts llm agents: An empirical study of experience-following behavior},
  author={Xiong, Zidi and Lin, Yuping and Xie, Wenya and He, Pengfei and Liu, Zirui and Tang, Jiliang and Lakkaraju, Himabindu and Xiang, Zhen},
  booktitle={Proceedings of the 64th Annual Meeting of the Association for Computational Linguistics (Volume 1: Long Papers)},
  pages={623--645},
  year={2026}
}

@inproceedings{shao2026your,
  title={Your agent may misevolve: Emergent risks in self-evolving llm agents},
  author={Shao, Shuai and Ren, Qihan and Liu, Dongrui and Qian, Chen and Wei, Boyi and Guo, Dadi and Yang, Jingyi and Song, Xinhao and Zhang, Linfeng and Zhang, Weinan and others},
  booktitle={International Conference on Learning Representations},
  volume={2026},
  pages={99728--99793},
  year={2026}
}

@article{chen2024agentpoison,
  title={Agentpoison: Red-teaming llm agents via poisoning memory or knowledge bases},
  author={Chen, Zhaorun and Xiang, Zhen and Xiao, Chaowei and Song, Dawn and Li, Bo},
  journal={Advances in Neural Information Processing Systems},
  volume={37},
  pages={130185--130213},
  year={2024}
}

@inproceedings{hu2026evaluating,
  title={Evaluating memory in llm agents via incremental multi-turn interactions},
  author={Hu, Yuanzhe and Wang, Yu and McAuley, Julian},
  booktitle={International Conference on Learning Representations},
  volume={2026},
  pages={156259--156291},
  year={2026}
}

@article{da2026mitigating,
  title={Mitigating False Positives in Static Memory Safety Analysis of Rust Programs via Reinforcement Learning},
  author={Da Silva, Leuson and Khomh, Foutse and Chimalakonda, Sridhar and others},
  journal={arXiv preprint arXiv:2605.04000},
  year={2026}
}

@article{shutova2026evaluating,
  title={Evaluating memory structure in llm agents},
  author={Shutova, Alina and Olenina, Alexandra and Vinogradov, Ivan and Sinitsin, Anton},
  journal={arXiv preprint arXiv:2602.11243},
  year={2026}
}

@article{karamchandani2026your,
  title={Your Agent's Memories Are Not Its Own: Forged Reasoning Attacks on LLM Agent Memory and Defenses},
  author={Karamchandani, Neeraj and Nagasubramaniam, Piyush and Zhu, Sencun and Wu, Dinghao},
  journal={arXiv preprint arXiv:2607.05029},
  year={2026}
}

@inproceedings{maharana2024evaluating,
  title={Evaluating very long-term conversational memory of llm agents},
  author={Maharana, Adyasha and Lee, Dong-Ho and Tulyakov, Sergey and Bansal, Mohit and Barbieri, Francesco and Fang, Yuwei},
  booktitle={Proceedings of the 62nd Annual Meeting of the Association for Computational Linguistics (Volume 1: Long Papers)},
  pages={13851--13870},
  year={2024}
}

@article{wu2024longmemeval,
  title={Longmemeval: Benchmarking chat assistants on long-term interactive memory},
  author={Wu, Di and Wang, Hongwei and Yu, Wenhao and Zhang, Yuwei and Chang, Kai-Wei and Yu, Dong},
  journal={arXiv preprint arXiv:2410.10813},
  year={2024}
}

@inproceedings{xie2024adaptive,
  title={Adaptive chameleon or stubborn sloth: Revealing the behavior of large language models in knowledge conflicts},
  author={Xie, Jian and Zhang, Kai and Chen, Jiangjie and Lou, Renze and Su, Yu},
  booktitle={International Conference on Learning Representations},
  volume={2024},
  pages={35623--35646},
  year={2024}
}

@inproceedings{xu2024knowledge,
  title={Knowledge conflicts for llms: A survey},
  author={Xu, Rongwu and Qi, Zehan and Guo, Zhijiang and Wang, Cunxiang and Wang, Hongru and Zhang, Yue and Xu, Wei},
  booktitle={Proceedings of the 2024 Conference on Empirical Methods in Natural Language Processing},
  pages={8541--8565},
  year={2024}
}

@article{zhao2024analysing,
  title={Analysing the residual stream of language models under knowledge conflicts},
  author={Zhao, Yu and Du, Xiaotang and Hong, Giwon and Gema, Aryo Pradipta and Devoto, Alessio and Wang, Hongru and He, Xuanli and Wong, Kam-Fai and Minervini, Pasquale},
  journal={arXiv preprint arXiv:2410.16090},
  year={2024}
}

@inproceedings{jin2024tug,
  title={Tug-of-war between knowledge: Exploring and resolving knowledge conflicts in retrieval-augmented language models},
  author={Jin, Zhuoran and Cao, Pengfei and Chen, Yubo and Liu, Kang and Jiang, Xiaojian and Xu, Jiexin and Qiuxia, Li and Zhao, Jun},
  booktitle={Proceedings of the 2024 joint international conference on computational linguistics, language resources and evaluation (LREC-COLING 2024)},
  pages={16867--16878},
  year={2024}
}

@article{song2024shortcut,
  title={Shortcut learning in in-context learning: A survey},
  author={Song, Rui and Li, Yingji and Shi, Lida and Giunchiglia, Fausto and Xu, Hao},
  journal={arXiv preprint arXiv:2411.02018},
  year={2024}
}

@inproceedings{mccoy2019right,
  title={Right for the wrong reasons: Diagnosing syntactic heuristics in natural language inference},
  author={McCoy, R Thomas and Pavlick, Ellie and Linzen, Tal},
  booktitle={Proceedings of the 57th annual meeting of the association for computational linguistics},
  pages={3428--3448},
  year={2019}
}

@inproceedings{tang2023large,
  title={Large language models can be lazy learners: Analyze shortcuts in in-context learning},
  author={Tang, Ruixiang and Kong, Dehan and Huang, Longtao and Xue, Hui},
  booktitle={Findings of the association for computational linguistics: ACL 2023},
  pages={4645--4657},
  year={2023}
}

@inproceedings{mirzadeh2025gsm,
  title={Gsm-symbolic: Understanding the limitations of mathematical reasoning in large language models},
  author={Mirzadeh, Iman and Alizadeh-Vahid, Keivan and Shahrokhi, Hooman and Tuzel, Oncel and Bengio, Samy and Farajtabar, Mehrdad},
  booktitle={International Conference on Learning Representations},
  volume={2025},
  pages={94743--94765},
  year={2025}
}

@inproceedings{misra2026gitchameleon,
  title={GitChameleon 2.0: Evaluating AI Code Generation Against Python Library Version Incompatibilities},
  author={Misra, Diganta and Islah, Nizar and May, Victor and Rauby, Brice and Wang, Zihan and Gehring, Justine and Orvieto, Antonio and Chaudhary, Muawiz Sajjad and Muller, Eilif B and Rish, Irina and others},
  booktitle={Proceedings of the 64th Annual Meeting of the Association for Computational Linguistics (Volume 1: Long Papers)},
  pages={46792--46831},
  year={2026}
}

@inproceedings{suzgun2023challenging,
  title={Challenging big-bench tasks and whether chain-of-thought can solve them},
  author={Suzgun, Mirac and Scales, Nathan and Sch{\"a}rli, Nathanael and Gehrmann, Sebastian and Tay, Yi and Chung, Hyung Won and Chowdhery, Aakanksha and Le, Quoc and Chi, Ed H and Zhou, Denny and others},
  booktitle={Findings of the Association for Computational Linguistics: ACL 2023},
  pages={13003--13051},
  year={2023}
}

@inproceedings{hendel2023context,
  title={In-context learning creates task vectors},
  author={Hendel, Roee and Geva, Mor and Globerson, Amir},
  booktitle={Findings of the Association for Computational Linguistics: EMNLP 2023},
  pages={9318--9333},
  year={2023}
}

@article{zhao2026understanding,
  title={Understanding parametric and contextual knowledge reconciliation within large language models},
  author={Zhao, Jun and Yang, Yongzhuo and Hu, Xiang and Tong, Jingqi and Lu, Yi and Wu, Wei and Gui, Tao and Zhang, Qi and Huang, Xuanjing},
  journal={Advances in Neural Information Processing Systems},
  volume={38},
  pages={102978--103012},
  year={2026}
}

@article{du2026memory,
  title={Memory for autonomous llm agents: Mechanisms, evaluation, and emerging frontiers},
  author={Du, Pengfei},
  journal={arXiv preprint arXiv:2603.07670},
  year={2026}
}

@inproceedings{salama2025meminsight,
  title={Meminsight: Autonomous memory augmentation for llm agents},
  author={Salama, Rana and Cai, Jason and Yuan, Michelle and Currey, Anna and Sunkara, Monica and Zhang, Yi and Benajiba, Yassine},
  booktitle={Proceedings of the 2025 Conference on Empirical Methods in Natural Language Processing},
  pages={33124--33140},
  year={2025}
}

@article{feder2021causalm,
  title={Causalm: Causal model explanation through counterfactual language models},
  author={Feder, Amir and Oved, Nadav and Shalit, Uri and Reichart, Roi},
  journal={Computational Linguistics},
  volume={47},
  number={2},
  pages={333--386},
  year={2021}
}

@article{lyu2024towards,
  title={Towards faithful model explanation in nlp: A survey},
  author={Lyu, Qing and Apidianaki, Marianna and Callison-Burch, Chris},
  journal={Computational Linguistics},
  volume={50},
  number={2},
  pages={657--723},
  year={2024}
}

@article{ravfogel2411gumbel,
  title={Gumbel counterfactual generation from language models, 2025},
  author={Ravfogel, Shauli and Svete, Anej and Sn{\ae}bjarnarson, V{\'e}steinn and Cotterell, Ryan},
  journal={URL https://arxiv. org/abs/2411.07180}
}

@article{miller2026counterfactual,
  title={Counterfactual reasoning: an analysis of in-context emergence},
  author={Miller, Moritz and Sch{\"o}lkopf, Bernhard and Guo, Siyuan},
  journal={Advances in Neural Information Processing Systems},
  volume={38},
  pages={87510--87544},
  year={2026}
}

@article{pona2026abstract,
  title={Abstract Counterfactuals for Language Model Agents},
  author={Pona, Edoardo and Kazemi, Milad and Du, Yali and Watson, David and Paoletti, Nicola},
  journal={Advances in Neural Information Processing Systems},
  volume={38},
  pages={87484--87509},
  year={2026}
}

@article{yan2023counterfactual,
  title={Counterfactual generation with identifiability guarantees},
  author={Yan, Hanqi and Kong, Lingjing and Gui, Lin and Chi, Yuejie and Xing, Eric and He, Yulan and Zhang, Kun},
  journal={Advances in neural information processing systems},
  volume={36},
  pages={56256--56277},
  year={2023}
}

@article{guo2023causal,
  title={Causal de finetti: On the identification of invariant causal structure in exchangeable data},
  author={Guo, Siyuan and T{\'o}th, Viktor and Sch{\"o}lkopf, Bernhard and Husz{\'a}r, Ferenc},
  journal={Advances in Neural Information Processing Systems},
  volume={36},
  pages={36463--36475},
  year={2023}
}

@inproceedings{nasr2023counterfactual,
  title={Counterfactual identifiability of bijective causal models},
  author={Nasr-Esfahany, Arash and Alizadeh, Mohammad and Shah, Devavrat},
  booktitle={International conference on machine learning},
  pages={25733--25754},
  year={2023},
  organization={PMLR}
}

@article{meng2022locating,
  title={Locating and editing factual associations in gpt},
  author={Meng, Kevin and Bau, David and Andonian, Alex and Belinkov, Yonatan},
  journal={Advances in neural information processing systems},
  volume={35},
  pages={17359--17372},
  year={2022}
}

@inproceedings{geva2023dissecting,
  title={Dissecting recall of factual associations in auto-regressive language models},
  author={Geva, Mor and Bastings, Jasmijn and Filippova, Katja and Globerson, Amir},
  booktitle={Proceedings of the 2023 Conference on Empirical Methods in Natural Language Processing},
  pages={12216--12235},
  year={2023}
}

@article{xie2021explanation,
  title={An explanation of in-context learning as implicit bayesian inference},
  author={Xie, Sang Michael and Raghunathan, Aditi and Liang, Percy and Ma, Tengyu},
  journal={arXiv preprint arXiv:2111.02080},
  year={2021}
}

@article{veitch2021counterfactual,
  title={Counterfactual invariance to spurious correlations: Why and how to pass stress tests},
  author={Veitch, Victor and D'Amour, Alexander and Yadlowsky, Steve and Eisenstein, Jacob},
  journal={arXiv preprint arXiv:2106.00545},
  year={2021}
}

@inproceedings{todd2024function,
  title={Function vectors in large language models},
  author={Todd, Eric and Li, Millicent and Sen Sharma, Arnab and Mueller, Aaron and Wallace, Byron and Bau, David},
  booktitle={International conference on learning representations},
  volume={2024},
  pages={17282--17333},
  year={2024}
}

@article{kahardipraja2026atlas,
  title={The atlas of in-context learning: How attention heads shape in-context retrieval augmentation},
  author={Kahardipraja, Patrick and Achtibat, Reduan and Wiegand, Thomas and Samek, Wojciech and Lapuschkin, Sebastian},
  journal={Advances in Neural Information Processing Systems},
  volume={38},
  pages={118164--118208},
  year={2026}
}

@inproceedings{chen2024inside,
  title={INSIDE: LLMs' internal states retain the power of hallucination detection},
  author={Chen, Chao and Liu, Kai and Chen, Ze and Gu, Yi and Wu, Yue and Tao, Mingyuan and Fu, Zhihang and Ye, Jieping},
  booktitle={International Conference on Learning Representations},
  volume={2024},
  pages={3056--3076},
  year={2024}
}

@article{cohen2024contextcite,
  title={Contextcite: Attributing model generation to context},
  author={Cohen-Wang, Benjamin and Shah, Harshay and Georgiev, Kristian and M{\k{a}}dry, Aleksander},
  journal={Advances in Neural Information Processing Systems},
  volume={37},
  pages={95764--95807},
  year={2024}
}

@article{zou2023representation,
  title={Representation engineering: A top-down approach to ai transparency},
  author={Zou, Andy and Phan, Long and Chen, Sarah and Campbell, James and Guo, Phillip and Ren, Richard and Pan, Alexander and Yin, Xuwang and Mazeika, Mantas and Dombrowski, Ann-Kathrin and others},
  journal={arXiv preprint arXiv:2310.01405},
  year={2023}
}

@article{turner2023steering,
  title={Steering language models with activation engineering},
  author={Turner, Alexander Matt and Thiergart, Lisa and Leech, Gavin and Udell, David and Vazquez, Juan J and Mini, Ulisse and MacDiarmid, Monte},
  journal={arXiv preprint arXiv:2308.10248},
  year={2023}
}

@article{shai2024transformers,
  title={Transformers represent belief state geometry in their residual stream},
  author={Shai, Adam S and Marzen, Sarah E and Teixeira, Lucas and Oldenziel, Alexander G and Riechers, Paul M},
  journal={Advances in Neural Information Processing Systems},
  volume={37},
  pages={75012--75034},
  year={2024}
}

@inproceedings{bigelow2024context,
  title={In-context learning dynamics with random binary sequences},
  author={Bigelow, Eric and Lubana, Ekdeep Singh and Dick, Robert and Tanaka, Hidenori and Ullman, Tomer},
  booktitle={International Conference on Learning Representations},
  volume={2024},
  pages={56330--56373},
  year={2024}
}

@incollection{balke2022probabilistic,
  title={Probabilistic evaluation of counterfactual queries},
  author={Balke, Alexander and Pearl, Judea},
  booktitle={Probabilistic and causal inference: The works of Judea Pearl},
  pages={237--254},
  year={2022}
}

@article{shpitser2012effects,
  title={Effects of treatment on the treated: Identification and generalization},
  author={Shpitser, Ilya and Pearl, Judea},
  journal={arXiv preprint arXiv:1205.2615},
  year={2012}
}

@article{pearl2003causality,
  title={Causality: models, reasoning, and inference},
  author={Pearl, Judea and others},
  journal={Econometric Theory},
  volume={19},
  number={675-685},
  pages={46},
  year={2003},
  publisher={Cambridge university press}
}

@article{brown2020language,
  title={Language models are few-shot learners},
  author={Brown, Tom and Mann, Benjamin and Ryder, Nick and Subbiah, Melanie and Kaplan, Jared D and Dhariwal, Prafulla and Neelakantan, Arvind and Shyam, Pranav and Sastry, Girish and Askell, Amanda and others},
  journal={Advances in neural information processing systems},
  volume={33},
  pages={1877--1901},
  year={2020}
}

@article{kaplan2020scaling,
  title={Scaling laws for neural language models},
  author={Kaplan, Jared and McCandlish, Sam and Henighan, Tom and Brown, Tom B and Chess, Benjamin and Child, Rewon and Gray, Scott and Radford, Alec and Wu, Jeffrey and Amodei, Dario},
  journal={arXiv preprint arXiv:2001.08361},
  year={2020}
}

@article{bommasani2021opportunities,
  title={On the opportunities and risks of foundation models},
  author={Bommasani, Rishi and Hudson, Drew A and Adeli, Ehsan and Altman, Russ and Arora, Simran and von Arx, Sydney and Bernstein, Michael S and Bohg, Jeannette and Bosselut, Antoine and Brunskill, Emma and others},
  journal={arXiv preprint arXiv:2108.07258},
  year={2021}
}

@inproceedings{niu2022efficient,
  title={Efficient test-time model adaptation without forgetting},
  author={Niu, Shuaicheng and Wu, Jiaxiang and Zhang, Yifan and Chen, Yaofo and Zheng, Shijian and Zhao, Peilin and Tan, Mingkui},
  booktitle={International conference on machine learning},
  pages={16888--16905},
  year={2022},
  organization={PMLR}
}

@inproceedings{chen2026test,
  title={Test-time adaptation for llm agents via environment interaction},
  author={Chen, Arthur and Liu, Zuxin and Zhang, Jianguo and Prabhakar, Akshara and Liu, Zhiwei and Heinecke, Shelby and Savarese, Silvio and Zhong, Victor and Xiong, Caiming},
  booktitle={International Conference on Learning Representations},
  volume={2026},
  pages={39256--39288},
  year={2026}
}

@article{abel2023definition,
  title={A definition of continual reinforcement learning},
  author={Abel, David and Barreto, Andr{\'e} and Van Roy, Benjamin and Precup, Doina and Van Hasselt, Hado and Singh, Satinder},
  journal={Advances in Neural Information Processing Systems},
  volume={36},
  pages={50377--50407},
  year={2023}
}

@article{lu2020reset,
  title={Reset-free lifelong learning with skill-space planning},
  author={Lu, Kevin and Grover, Aditya and Abbeel, Pieter and Mordatch, Igor},
  journal={arXiv preprint arXiv:2012.03548},
  year={2020}
}

@inproceedings{zhong2024memorybank,
  title={Memorybank: Enhancing large language models with long-term memory},
  author={Zhong, Wanjun and Guo, Lianghong and Gao, Qiqi and Ye, He and Wang, Yanlin},
  booktitle={Proceedings of the AAAI conference on artificial intelligence},
  volume={38},
  number={17},
  pages={19724--19731},
  year={2024}
}

@article{mi2026skill,
  title={Skill-pro: Learning reusable skills from experience via non-parametric ppo for llm agents},
  author={Mi, Qirui and Ma, Zhijian and Yang, Mengyue and Li, Haoxuan and Wang, Yisen and Zhang, Haifeng and Wang, Jun},
  journal={arXiv preprint arXiv:2602.01869},
  year={2026}
}

@article{yao2026can,
  title={Can Dependencies Induced by LLM-Agent Workflows Be Trusted?},
  author={Yao, Yu and Song, Yiliao Lia and Xie, Yian and Fan, Mengdan and Guo, Mingyu and Liu, Tongliang},
  journal={Advances in Neural Information Processing Systems},
  volume={38},
  pages={17431--17469},
  year={2026}
}

@inproceedings{finn2017model,
  title={Model-agnostic meta-learning for fast adaptation of deep networks},
  author={Finn, Chelsea and Abbeel, Pieter and Levine, Sergey},
  booktitle={International conference on machine learning},
  pages={1126--1135},
  year={2017},
  organization={PMLR}
}

@inproceedings{cobbe2019quantifying,
  title={Quantifying generalization in reinforcement learning},
  author={Cobbe, Karl and Klimov, Oleg and Hesse, Chris and Kim, Taehoon and Schulman, John},
  booktitle={International conference on machine learning},
  pages={1282--1289},
  year={2019},
  organization={PMLR}
}

@article{liao2026belief,
  title={Belief Memory: Agent Memory Under Partial Observability},
  author={Liao, Junfeng and Wang, Qizhou and Zhu, Jianing and Du, Bo and Yan, Rui and Chen, Xiuying},
  journal={arXiv preprint arXiv:2605.05583},
  year={2026}
}

@article{du2023shortcut,
  title={Shortcut learning of large language models in natural language understanding},
  author={Du, Mengnan and He, Fengxiang and Zou, Na and Tao, Dacheng and Hu, Xia},
  journal={Communications of the ACM},
  volume={67},
  number={1},
  pages={110--120},
  year={2023},
  publisher={ACM New York, NY, USA}
}

@article{amodei2016concrete,
  title={Concrete problems in AI safety},
  author={Amodei, Dario and Olah, Chris and Steinhardt, Jacob and Christiano, Paul and Schulman, John and Man{\'e}, Dan},
  journal={arXiv preprint arXiv:1606.06565},
  year={2016}
}

@inproceedings{panwar2024context,
  title={In-context learning through the bayesian prism},
  author={Panwar, Madhur and Ahuja, Kabir and Goyal, Navin},
  booktitle={International Conference on Learning Representations},
  volume={2024},
  pages={49789--49843},
  year={2024}
}

@article{kirk2021survey,
  title={A survey of generalisation in deep reinforcement learning},
  author={Kirk, Robert and Zhang, Amy and Grefenstette, Edward and Rockt{\"a}schel, Tim},
  journal={arXiv preprint arXiv:2111.09794},
  volume={1},
  number={16},
  pages={3},
  year={2021}
}

@inproceedings{le2026sage,
  title={SAGE: Scalable AI Governance \& Evaluation},
  author={Le, Benjamin H and Lu, Xueying and Stern, Nicholas and Liu, Wenqiong and Lapchuk, Igor and Li, Xiang and Zheng, Baofen and Rosenberg, Kevin and Huang, Jiewen and Zhang, Zhe and others},
  booktitle={Proceedings of the 32nd ACM SIGKDD Conference on Knowledge Discovery and Data Mining V. 2},
  pages={7555--7565},
  year={2026}
}

@article{su2024texttt,
  title={ConflictBank: A Benchmark for Evaluating the Influence of Knowledge Conflicts in LLMs},
  author={Su, Zhaochen and Zhang, Jun and Qu, Xiaoye and Zhu, Tong and Li, Yanshu and Sun, Jiashuo and Li, Juntao and Zhang, Min and Cheng, Yu},
  journal={Advances in Neural Information Processing Systems},
  volume={37},
  pages={103242--103268},
  year={2024}
}

@article{hou2024wikicontradict,
  title={Wikicontradict: A benchmark for evaluating llms on real-world knowledge conflicts from wikipedia},
  author={Hou, Yufang and Pascale, Alessandra and Carnerero-Cano, Javier and Tchrakian, Tigran and Marinescu, Radu and Daly, Elizabeth and Padhi, Inkit and Sattigeri, Prasanna},
  journal={Advances in Neural Information Processing Systems},
  volume={37},
  pages={109701--109747},
  year={2024}
}

@inproceedings{orgad2025llms,
  title={Llms know more than they show: On the intrinsic representation of llm hallucinations},
  author={Orgad, Hadas and Toker, Michael and Gekhman, Zorik and Reichart, Roi and Szpektor, Idan and Kotek, Hadas and Belinkov, Yonatan},
  booktitle={International Conference on Learning Representations},
  volume={2025},
  pages={66880--66913},
  year={2025}
}

@inproceedings{azaria2023internal,
  title={The internal state of an LLM knows when it’s lying},
  author={Azaria, Amos and Mitchell, Tom},
  booktitle={Findings of the Association for Computational Linguistics: EMNLP 2023},
  pages={967--976},
  year={2023}
}

@article{kadavath2022language,
  title={Language models (mostly) know what they know},
  author={Kadavath, Saurav and Conerly, Tom and Askell, Amanda and Henighan, Tom and Drain, Dawn and Perez, Ethan and Schiefer, Nicholas and Hatfield-Dodds, Zac and DasSarma, Nova and Tran-Johnson, Eli and others},
  journal={arXiv preprint arXiv:2207.05221},
  year={2022}
}

@inproceedings{cheang2026llms,
  title={Do LLMs Really Know What They Don’t Know? Internal States Mainly Reflect Knowledge Recall Rather Than Truthfulness},
  author={Cheang, Chi Seng and Chan, Hou Pong and Zhang, Wenxuan and Deng, Yang},
  booktitle={Findings of the Association for Computational Linguistics: ACL 2026},
  pages={713--730},
  year={2026}
}

@article{li2024survey,
  title={A survey on the honesty of large language models},
  author={Li, Siheng and Yang, Cheng and Wu, Taiqiang and Shi, Chufan and Zhang, Yuji and Zhu, Xinyu and Cheng, Zesen and Cai, Deng and Yu, Mo and Liu, Lemao and others},
  journal={arXiv preprint arXiv:2409.18786},
  year={2024}
}

@inproceedings{creager2021environment,
  title={Environment inference for invariant learning},
  author={Creager, Elliot and Jacobsen, J{\"o}rn-Henrik and Zemel, Richard},
  booktitle={International conference on machine learning},
  pages={2189--2200},
  year={2021},
  organization={PMLR}
}

@article{yao2022react,
  title={React: Synergizing reasoning and acting in language models},
  author={Yao, Shunyu and Zhao, Jeffrey and Yu, Dian and Du, Nan and Shafran, Izhak and Narasimhan, Karthik and Cao, Yuan},
  journal={arXiv preprint arXiv:2210.03629},
  year={2022}
}

@inproceedings{liu2024agentbench,
  title={Agentbench: Evaluating llms as agents},
  author={Liu, Xiao and Yu, Hao and Zhang, Hanchen and Xu, Yifan and Lei, Xuanyu and Lai, Hanyu and Gu, Yu and Ding, Hangliang and Men, Kaiwen and Yang, Kejuan and others},
  booktitle={International Conference on Learning Representations},
  volume={2024},
  pages={52989--53046},
  year={2024}
}

@inproceedings{mialon2024gaia,
  title={Gaia: a benchmark for general ai assistants},
  author={Mialon, Gr{\'e}goire and Fourrier, Cl{\'e}mentine and Wolf, Thomas and LeCun, Yann and Scialom, Thomas},
  booktitle={International Conference on Learning Representations},
  volume={2024},
  pages={9025--9049},
  year={2024}
}

@inproceedings{qin2024toolllm,
  title={Toolllm: Facilitating large language models to master 16000+ real-world apis},
  author={Qin, Yujia and Liang, Shihao and Ye, Yining and Zhu, Kunlun and Yan, Lan and Lu, Yaxi and Lin, Yankai and Cong, Xin and Tang, Xiangru and Qian, Bill and others},
  booktitle={International Conference on Learning Representations},
  volume={2024},
  pages={9695--9717},
  year={2024}
}

@inproceedings{yu2026polyskill,
  title={Polyskill: Learning generalizable skills through polymorphic abstraction for continual learning},
  author={Yu, Simon and Li, Gang and Shi, Weiyan and Qi, Peng},
  booktitle={International Conference on Learning Representations},
  volume={2026},
  pages={140298--140326},
  year={2026}
}

@article{zheng2025skillweaver,
  title={Skillweaver: Web agents can self-improve by discovering and honing skills},
  author={Zheng, Boyuan and Fatemi, Michael Y and Jin, Xiaolong and Wang, Zora Zhiruo and Gandhi, Apurva and Song, Yueqi and Gu, Yu and Srinivasa, Jayanth and Liu, Gaowen and Neubig, Graham and others},
  journal={arXiv preprint arXiv:2504.07079},
  year={2025}
}

@article{wu2026co,
  title={Co-evolving llm decision and skill bank agents for long-horizon tasks},
  author={Wu, Xiyang and Li, Zongxia and Shi, Guangyao and Duffy, Alexander and Marques, Tyler and Olson, Matthew Lyle and Zhou, Tianyi and Manocha, Dinesh},
  journal={arXiv preprint arXiv:2604.20987},
  year={2026}
}

@article{zhang2026coevoskills,
  title={Coevoskills: Self-evolving agent skills via co-evolutionary verification},
  author={Zhang, Hanrong and Fan, Shicheng and Zou, Henry Peng and Chen, Yankai and Wang, Zhenting and Zhou, Jiayu and Li, Chengze and Huang, Wei-Chieh and Yao, Yifei and Zheng, Kening and others},
  journal={arXiv preprint arXiv:2604.01687},
  year={2026}
}

@article{zhao2026large,
  title={Large Language Model Agents Are Not Always Faithful Self-Evolvers. arXiv 2026},
  author={Zhao, Weixiang and Wang, Yingshuo and Zhang, Yichen and Deng, Yang and Zhao, Yanyan and Che, Wanxiang and Qin, Bing and Liu, Ting},
  journal={arXiv preprint arXiv:2601.22436},
  year={2026}
}

@article{hu2026continual,
  title={When continual learning moves to memory: A study of experience reuse in llm agents},
  author={Hu, Qisheng and Long, Quanyu and Wang, Wenya},
  journal={arXiv preprint arXiv:2604.27003},
  year={2026}
}

\newpage
\appendix
\addcontentsline{toc}{section}{Appendix} 
\begin{center}
	\rule{5.6in}{1.2pt}\\
    \vspace{2mm}
	{\Large\bf Appendix for ``Beyond the Shadows of Plato's Cave:\\Evaluating False Memory in Autonomous Agents via Counterfactual Reasoning''}
	\rule{5.6in}{1.2pt}
\end{center}

\part{} 

\parttoc 

\section{Related Work}
\paragraph{Autonomous agents.}
Autonomous agents have emerged as an important mechanism for addressing complex and rapidly changing problems.
Thanks to their self-improvement capabilities, these agents can iteratively refine their behavior based on feedback from previous attempts \citep{wang2023voyager, shinn2023reflexion, madaan2023self}, enabling them to achieve strong performance across a wide range of tasks \cite{yao2022react, liu2024agentbench, mialon2024gaia, qin2024toolllm}. 

This iterative learning process has consequently made memory a standard component of autonomous agents \citep{du2026memory, salama2025meminsight, packer2023memgpt, wang2024agent, xu2026mem, zhang2026g}. 
For example, many works leverage memory to retain feedback collected during self-improvement \citep{fu2025agentrefine}, while subsequent works use memory as a source of experience from which reusable artifacts can be distilled, including functional code \cite{lingam2025enhancing}, APIs \cite{zheng2025skillweaver}, skills \cite{yu2026polyskill, wu2026co, zhang2026coevoskills}, and abstract knowledge \citep{yang2024buffer, zhao2024expel}. 

Such accumulated experiences can then be retrieved and adapted to facilitate problem solving in similar but potentially different settings \citep{tran2026bifrost}. 
However, effective adaptation requires determining whether a stored experience remains faithful under changes in context and environment \citep{zhao2026large, hu2026continual}. 
This raises a critical yet underexplored problem of evaluating false memories studied by this work.
Addressing the false-memory problem is therefore essential for ensuring that autonomous agents can adapt effectively and reliably.

\paragraph{Agentic memory.}
Existing work has demonstrated that leveraging memory can provide substantial benefits to autonomous agents \citep{shutova2026evaluating}, yet agentic memories are inherently vulnerable to degradation \citep{chen2024agentpoison, karamchandani2026your}. 
Prior studies have shown that memories can be affected by shortcut learning \citep{song2024shortcut, tang2023large, mirzadeh2025gsm, mccoy2019right, veitch2021counterfactual}, distribution shifts \cite{amodei2016concrete, cobbe2019quantifying}, and conflicts with newly acquired knowledge \citep{misra2026gitchameleon, zhao2026understanding, xie2024adaptive, xu2024knowledge, zhao2024analysing, jin2024tug}. 
However, these works largely frame false memory as a problem of generalization \citep{da2026mitigating}, proposing benchmarks to assess retrieval effectiveness \citep{hu2026evaluating, maharana2024evaluating, wu2024longmemeval} and focusing primarily on calibrating reward design \citep{jin2024tug} or fine-tuning strategies \citep{mccoy2019right} to mitigate such failures. 

More recently, Misevolution \citep{shao2026your} provides empirical evidence that memory can degrade during self-evolution, even in large-scale agents. Demonstrations may exploit reward hacking and induce shortcut learning, while curated tools can become environment-dependent. Crucially, seemingly benign workflows may still introduce safety risks, suggesting that memory degradation is not always apparent from surface-level behavior. 
While these observations are important, they show empirical evidence of memory degradation within self-evolution rather than evaluating how faithful the memory can be under different changes of conditions.

This motivates the need for a formal evaluation framework that can assess false memory across diverse and arbitrary scenarios in practice, which is the focus of our work.

\paragraph{Counterfactual reasoning.}
Counterfactual reasoning has traditionally been grounded in structural causal models, where hypothetical interventions are used to characterize how changes to a variable affect downstream outcomes \citep{pearl2009causal}. 
Recent work has extended these ideas to large language models \citep{ravfogel2411gumbel}, studying both their ability to perform causal and counterfactual reasoning and the use of counterfactual interventions to understand model behavior \citep{feder2021causalm, lyu2024towards}. 
Notably, previous works demonstrate that counterfactual reasoning naturally emerges in in-context learning \citep{miller2026counterfactual} and subsequently provide theoretical guarantees of its identifiability \citep{yan2023counterfactual, guo2023causal, nasr2023counterfactual}. Subsequent works study the abstraction of knowledge when performing counterfactual reasoning \citep{pona2026abstract}.

These approaches primarily focus on explaining or evaluating model behavior based on answer effectiveness, which requires sampling answers and consequently incurs additional generation costs.
In contrast, we focus on directly studying agent internal beliefs by leveraging counterfactual reasoning to evaluate false memory without answer generation.

\paragraph{Concept representation in hidden states.}
A growing body of work demonstrates that agent hidden states encode compact abstractions that mediate their predictive behavior. 
Studies of in-context learning suggest that a set of demonstrations can represent a latent task vector \citep{hendel2023context, meng2022locating, geva2023dissecting, kahardipraja2026atlas}. These findings motivate using internal activations as a representation of the concepts underlying agent beliefs. 

Rather than using representations only to localize or explain existing model behavior, we track how this representation changes when the agent encounters a counterfactual query. This enables us to characterize the agent memory-induced belief through concept drift in representation space and to evaluate whether the resulting change reflects appropriate adaptation or persistence of a false memory.

\section{Theoretical Analysis}
\label{app:theoretical_analysis}
\subsection{Proof of Proposition~\ref{prop:cf_identifiability}}
\label{app:theoretical_analysis:proof_cf_identifiability}

\begin{proof}
Assume the SCM is Markovian, i.e., the exogenous variables $U_M,U_{\Theta_M},U_Q,U'$ are mutually independent. For any value $m$ of $M$, we have:

\paragraph{(i) Reduction of the counterfactual conditional.}
Since $M$ is a non-descendant of $Q$, intervening on $Q$ does not change $M$, i.e., $M_{q'}=M$.
Hence, by the counterfactual reduction rule for pretreatment evidence~\citep{pearl2003causality},
\begin{equation}
    P(\Theta'_{q'}\mid M=m) = P(\Theta'\mid do(Q=q'),M=m).
\end{equation}

\paragraph{(ii) Exchange intervention for observation.}
Consider the graph $G_{\underline Q}$ obtained by deleting all outgoing edges from $Q$. 
Because $Q$ is a root variable and, by the Markovian assumption, has no unobserved common cause with $M$, $\Theta_M$, or $\Theta'$, $Q$ is d-separated from $\Theta'$ given $M$ in $G_{\underline Q}$:
\begin{equation}
    (\Theta'\perp Q\mid M)_{G_{\underline Q}}.   
\end{equation}
Therefore, by Rule 2 of do-calculus,
\begin{equation}
    P(\Theta'\mid do(Q=q'),M=m) = P(\Theta'\mid Q=q',M=m).   
\end{equation}

Combining (i) and (ii) yields
\begin{equation}
    \boxed{
    P(\Theta'_{Q=q'}\mid M=m) = P(\Theta'\mid   do(Q=q'),M=m) = P(\Theta'\mid Q=q',M=m).
    }
\end{equation}
Therefore, the counterfactual distribution of $\Theta'$ under $do(Q=q')$ is identifiable from the observational distribution.  
\end{proof}

  
  


\subsection{Proof of Theorem~\ref{thm:concept_drift_identifiability_cf}}
\label{app:theoretical_analysis:proof_concept_drift_identifiability_cf}
\newsavebox{\twinnetworkbox}
\savebox{\twinnetworkbox}{%
  \begin{tikzpicture}[
      var/.style = {draw, circle, minimum size=0.85cm, inner sep=0pt, font=\small},
      exo/.style = {var, dashed},                 
      arr/.style = {-{Stealth[length=1.2mm]}, thick},
      lbl/.style = {font=\small, inner sep=1pt}
  ]
  \node[exo] (UM) {$U_M$};
  \node[var, below=0.5cm of UM]  (M)      {$M$};
  \node[var, below=0.5cm of M]   (ThetaM) {$\Theta_M$};
  \node[exo, left=0.6cm of ThetaM] (UTheta) {$U_\Theta$};

  \node[var, below left=0.7cm and 0.4cm of ThetaM]  (Theta1) {$\Theta'$};
  \node[var, below right=0.7cm and 0.4cm of ThetaM] (Theta2) {$\Theta'^{*}$};

  \node[var, left=0.6cm of Theta1] (Q)  {$Q$};
  \node[exo, left=0.6cm of Q]      (UQ) {$U_Q$};

  \node[var, right=0.6cm of Theta2,
        label={[lbl]right:$=q'$}] (Qstar) {$Q^{*}$};

  \node[exo, below=1.1cm of ThetaM] (Uprime) {$U'$};


  \foreach \s/\t in {UM/M, M/ThetaM, UTheta/ThetaM,
                     ThetaM/Theta1, ThetaM/Theta2,
                     UQ/Q, Q/Theta1, Qstar/Theta2,
                     Uprime/Theta1, Uprime/Theta2}
    \draw[arr] (\s) -- (\t);
  \end{tikzpicture}%
}

\paragraph{Twin-network construction}
Following~\cite{balke2022probabilistic}, a twin network is a graph $G_{\text{twin}}$ over two copies of $\mathbf{V}$ — the factual world and the counterfactual world $\mathbf{V}^*$ — which share the same exogenous variables $\mathbf{U}$.

By the node-merging rule~\citep{shpitser2012effects}, a counterfactual node whose parents are identical to its factual counterpart's can be merged with it. As a result, the twin network of the belief update is presented in Fig.~\ref{fig:twin_network}, $M$ and $\Theta_M$ are non-descendants of $Q$, so $M^* \equiv M$ and $\Theta_M^* \equiv \Theta_M$. Only $\Theta'$ gets a distinct copy.
\begin{figure}
  \centering
  \usebox{\twinnetworkbox}
  \caption{Twin network for the belief update mechanism.}
  \label{fig:twin_network}
\end{figure}

\paragraph{Concept-drift identifiability via counterfactual.}
We first establish some theoretical results.

\begin{lemma}[Counterfactual Ignorability]
\label{lemma:cf_ignorability}
    In the twin network (Fig.~\ref{fig:twin_network}), $\Theta'_{q'} \indep Q \mid M$.
\end{lemma}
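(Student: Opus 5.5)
The plan is to prove Lemma~\ref{lemma:cf_ignorability} by reading the required independence directly off the twin network in Fig.~\ref{fig:twin_network} with a d-separation argument.

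First I would write the counterfactual node structurally. In the twin network, $\Theta'_{q'}$ is the copy $\Theta'^{*}$, and
\[
\Theta'^{*}=f'(\Theta_M,q',U').
\]
It is therefore a deterministic function of $\Theta_M=f_{\Theta_M}(M,U_{\Theta_M})$ and of $U'$ only, with the intervened value $q'$ fixed as a constant. Under the node-merging rule already applied in the construction, $M^*\equiv M$ and $\Theta_M^*\equiv\Theta_M$. Hence $\Theta'^{*}$ is a measurable function of $(M,U_{\Theta_M},U')$.

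Next I would examine the factual $Q=f_Q(U_Q)$. It is a function of $U_Q$ alone.

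By the Markovian assumption used in Prop.~\ref{prop:cf_identifiability}, the exogenous variables $U_M,U_{\Theta_M},U_Q,U'$ are mutually independent. It follows that $U_Q$ is independent of $(U_M,U_{\Theta_M},U')$. Since $M=f_M(U_M)$, this gives $U_Q \indep (U_{\Theta_M},U') \mid M$.

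Applying measurable functions on each side preserves conditional independence. Therefore $Q\indep \Theta'^{*}\mid M$, which is the claim.

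Equivalently, I would check graphically that every path from $Q$ to $\Theta'^{*}$ in Fig.~\ref{fig:twin_network} is blocked given $M$:
\begin{itemize}
\item The path $Q\to\Theta'\leftarrow U'\to\Theta'^{*}$ has an unconditioned collider at $\Theta'$, so it is blocked.
\item The path $Q\to\Theta'\leftarrow\Theta_M\to\Theta'^{*}$ likewise has the collider $\Theta'$, so it is blocked.
\item $Q^*$ is a constant under $do(Q^*=q')$ and has no edge to $Q$.
\item $Q$ has no other neighbours besides $U_Q$, which is a root.
\end{itemize}

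The main subtlety is making sure no path through the collider $\Theta'$ is opened. This could only happen if we conditioned on $\Theta'$ or a descendant of it. The conditioning set is just $\{M\}$, and $M$ is not a descendant of $\Theta'$, so the collider stays closed.

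A second point to state explicitly is that the shared noise $U'$ links $\Theta'$ and $\Theta'^{*}$ but not $Q$. This is exactly why the ignorability holds even though the factual and counterfactual worlds share $U'$.
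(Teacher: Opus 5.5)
Your proposal is correct, and its graphical half is exactly the paper's proof. The paper lists the paths $Q\to\Theta'\leftarrow\Theta_M\to\Theta'^{*}$ and $Q\to\Theta'\leftarrow U'\to\Theta'^{*}$ and notes that both are blocked by the unconditioned collider $\Theta'$. It observes that $U_Q$ is a dead end. It then invokes the global Markov property of the twin network, under independent exogenous noise, to turn d-separation into conditional independence.

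Your first argument, the functional one, is a genuinely different and more elementary route. Writing $\Theta'^{*}=f'(f_{\Theta_M}(M,U_{\Theta_M}),q',U')$ and $Q=f_Q(U_Q)$ reduces everything to the single fact $U_Q\indep(U_M,U_{\Theta_M},U')$. That fact gives $Q\indep(M,\Theta'^{*})$, and weak union then yields $Q\indep\Theta'^{*}\mid M$. Your tersely stated intermediate step $U_Q\indep(U_{\Theta_M},U')\mid M$ is precisely this weak-union step, once you note that $M$ is a function of $U_M$. Your final ``apply functions'' step is legitimate because $\Theta'^{*}$ may depend on the conditioned variable $M$.

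This route buys you self-containment: it does not rely on the global Markov property of the twin network, which the paper asserts rather than proves. It also gives a slightly stronger conclusion, marginal independence of $Q$ from $(M,\Theta'_{q'})$. And it makes transparent why the shared noise $U'$ is harmless: $Q$ shares no exogenous variable with anything else. The d-separation route, in turn, scales to larger twin networks, where tracking functional dependencies by hand quickly becomes cumbersome.
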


\begin{proof}
    Enumerate every path between $Q$ and $\Theta'^*$ in Fig.~\ref{fig:twin_network} ($Q^*$ has no parents and is a constant, so it creates no path):
    \begin{enumerate}[label=(\roman*)]
        \item $Q \to \Theta' \leftarrow \Theta_M \to \Theta'^*$. Here $\Theta'$ is a collider, and neither it nor any descendant is in the conditioning set ${M}$. The path is blocked.
        \item $Q \to \Theta' \leftarrow U' \to \Theta'^*$. Again, $\Theta'$ is an unconditioned collider. The path is blocked.
        \item $Q \leftarrow U_Q$. $U_Q$ has no other children, so this is a dead end.
    \end{enumerate}
    
    Consequently, every path is blocked, so $Q$ and $\Theta'^*$ are d-separated given $M$. Under a mild Markovianity assumption that $U$ variables are independent, the twin network satisfies the global Markov property, so d-separation implies conditional independence.
\end{proof}

\begin{lemma}[Consistency]
\label{lemma:consistency}
    If $Q = q$, then $\Theta'_q = \Theta'$.
\end{lemma}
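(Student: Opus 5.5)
The plan is to derive Lemma~\ref{lemma:consistency} directly from the structural definition of the counterfactual in the SCM. I will not route it through the twin-network graphical rules; the argument is pointwise in the exogenous variables $U=(U_M,U_{\Theta_M},U_Q,U')$. Recall from the Prediction step that the potential outcome is defined unit-wise as
\[
\Theta'_{q}(u) = f'\bigl(\Theta_{M,q}(u), q, u'\bigr),
\]
where $\Theta_{M,q}(u)$ is the value of the memory concept in the submodel $\mathcal{M}_q$ obtained by replacing the equation $Q=f_Q(U_Q)$ with the constant $Q=q$. The first step is to note that $M$ and $\Theta_M$ are non-descendants of $Q$. Their structural equations $f_M$ and $f_{\Theta_M}$ are therefore untouched in $\mathcal{M}_q$, so
\[
M_q(u)=M(u)=f_M(u_M), \qquad \Theta_{M,q}(u)=\Theta_M(u)=f_{\Theta_M}\bigl(M(u),u_{\Theta_M}\bigr).
\]
This is the same fact used in the twin network of Fig.~\ref{fig:twin_network} to merge $M^*\equiv M$ and $\Theta_M^*\equiv\Theta_M$.

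The second step is to fix any unit $u$ in the event $\{Q=q\}$, so that $f_Q(u_Q)=q$. In the unintervened model the factual concept is
\[
\Theta'(u)=f'\bigl(\Theta_M(u),Q(u),u'\bigr)=f'\bigl(\Theta_M(u),q,u'\bigr).
\]
In the submodel it is
\[
\Theta'_q(u)=f'\bigl(\Theta_{M,q}(u),q,u'\bigr)=f'\bigl(\Theta_M(u),q,u'\bigr).
\]
Both expressions evaluate the same deterministic function $f'$ at identical arguments, so $\Theta'_q(u)=\Theta'(u)$ for every such $u$. Hence $\Theta'_q=\Theta'$ holds almost surely on $\{Q=q\}$, and in particular $P(\Theta'_q\mid Q=q,M=m)=P(\Theta'\mid Q=q,M=m)$. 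This distributional form is the one needed later, together with Lemma~\ref{lemma:cf_ignorability}, to turn the ETT expression in Thm.~\ref{thm:concept_drift_identifiability_cf} into observational quantities.

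No step here poses a real obstacle. The only point needing care is making explicit that the intervention $do(Q=q)$ leaves $\Theta_M$ unchanged; this relies on $Q$ being a root that is not an ancestor of $M$ or $\Theta_M$, as in Fig.~\ref{fig:causal_graph_agent_belief_update}. I would also remark that the exogenous noise $U'$ is shared between the factual and counterfactual worlds, as fixed in the Abduction step, and that the argument is otherwise the standard composition/consistency axiom of \citet{pearl2003causality}.
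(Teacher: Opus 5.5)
Your proposal is correct and is essentially the paper's proof, which is the same one-line substitution $\Theta'_q = f'(\Theta_M, q, U') = f'(\Theta_M, Q, U') = \Theta'$ on the event $\{Q=q\}$. The only difference is that you spell out why $\Theta_M$ is unaffected by $do(Q=q)$ (it is a non-descendant of $Q$), a step the paper leaves implicit.
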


\begin{proof}
    $\Theta'_q = f'(\Theta_M, q, U') = f'(\Theta_M, Q, U') = \Theta'$ on the event ${Q=q}$.
\end{proof}

Given the results of Lemma~\ref{lemma:cf_ignorability} and Lemma~\ref{lemma:consistency}, we start the proof of Thm.~\ref{thm:concept_drift_identifiability_cf} by first extending Prop.~\ref{prop:cf_identifiability} to the twin-network as follows.

\begin{proposition}[Counterfactual-Factual Identifiability]
\label{prop:f_cf_identifiability}
    Given $q \in \{q_0, q'\}$,
    \begin{equation}
        P(\Theta'_q = \theta \mid Q=q_0, M=m) = P(\Theta' = \theta \mid Q=q, M=m).  
    \end{equation}
\end{proposition}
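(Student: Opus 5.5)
The plan is to split on the two values of $q$ and reduce each case to results already established: Lemma~\ref{lemma:consistency} for the factual case $q=q_0$, and Lemma~\ref{lemma:cf_ignorability} combined with Prop.~\ref{prop:cf_identifiability} for the counterfactual case $q=q'$. Throughout I work in the twin network of Fig.~\ref{fig:twin_network} under the Markovian assumption that $U_M,U_{\Theta_M},U_Q,U'$ are mutually independent.

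\emph{Case $q=q_0$.} On the conditioning event $\{Q=q_0\}$, Lemma~\ref{lemma:consistency} gives $\Theta'_{q_0}=\Theta'$ pointwise. Substituting inside the probability yields
\begin{equation*}
P(\Theta'_{q_0}=\theta\mid Q=q_0,M=m)=P(\Theta'=\theta\mid Q=q_0,M=m),
\end{equation*}
which is the claim. The only requirement is that $P(Q=q_0,M=m)>0$, so I will assume positivity of the query given the memory.

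\emph{Case $q=q'$.} This proceeds in three steps.
\begin{enumerate}
\item Lemma~\ref{lemma:cf_ignorability} states $\Theta'_{q'}\indep Q\mid M$. This lets me drop the factual evidence $Q=q_0$:
\begin{equation*}
P(\Theta'_{q'}=\theta\mid Q=q_0,M=m)=P(\Theta'_{q'}=\theta\mid M=m).
\end{equation*}
\item Prop.~\ref{prop:cf_identifiability} then identifies $P(\Theta'_{q'}=\theta\mid M=m)$ with $P(\Theta'=\theta\mid Q=q',M=m)$.
\item Alternatively, one can bypass step 2 and argue directly. Since $Q$ is a root, Markovianity gives $Q\indep(U_M,U_{\Theta_M},U')$. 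Hence $\Theta'_{q'}=f'(\Theta_M,q',U')$ has the same conditional law given $M=m$ as $\Theta'$ given $Q=q',M=m$. This is the same as applying ignorability together with consistency at $q'$.
\end{enumerate}

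The main obstacle is step 1. The d-separation in Lemma~\ref{lemma:cf_ignorability} has to be legitimate in a twin network where $\Theta_M$ and $U'$ are shared between the factual and counterfactual copies. In particular, I must check that conditioning on $M$ opens no path through $\Theta_M$, since $\Theta_M$ still carries noise $U_{\Theta_M}$ that is not pinned down by $M$ alone. I would verify this by noting that every path from $Q$ to $\Theta'^*$ passes through the collider $\Theta'$, which is unconditioned. The collider blocks the path regardless of $\Theta_M$'s noise, so the independence holds, and positivity of $P(Q=q',M=m)$ makes the final conditional well defined.
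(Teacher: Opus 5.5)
Your proposal is correct and follows the paper's proof essentially step for step. The factual case $q=q_0$ uses consistency (Lemma~\ref{lemma:consistency}). In the case $q=q'$, ignorability (Lemma~\ref{lemma:cf_ignorability}) drops the evidence $Q=q_0$, and the result is then identified at $q'$. The one small difference is in how you close that last step: your main line cites Prop.~\ref{prop:cf_identifiability}, whereas the paper applies ignorability again (with positivity) to condition on $Q=q'$ and then uses consistency. The paper's route is exactly your alternative step~3.
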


\begin{proof}
    We derive the proof of the factual term and the counterfactual term as follows.
    
    \paragraph{Factual term ($q=q_0$).}
    By Lemma~\ref{lemma:consistency}, on ${Q=q_0}$ we have $\Theta'_{q_0} = \Theta'$. Hence, 
    \begin{equation}
        P(\Theta'_{q_0}=\theta \mid q_0, m) = P(\Theta'=\theta \mid q_0, m).
    \end{equation}

    \paragraph{Counterfactual term ($q=q'$).}
    Apply Lemma~\ref{lemma:cf_ignorability} (L1) and Lemma~\ref{lemma:consistency} (L2),
    \begin{align}  
        P(\Theta'_{q'}=\theta \mid Q=q_0, M=m)  
        &\overset{\text{L1}}{=} P(\Theta'_{q'}=\theta \mid M=m) \\  
        &\overset{\text{L1, positivity}}{=} P(\Theta'_{q'}=\theta \mid Q=q', M=m) \\  
        &\overset{\text{L2}}{=} P(\Theta'=\theta \mid Q=q', M=m).  
    \end{align}
    The middle step conditions on $Q=q'$, which requires positivity.
\end{proof}

Now we prove the main result of Thm.~\ref{thm:concept_drift_identifiability_cf}.
\begin{proof}

    By~\cite{shpitser2012effects}, an Effect of Treatment on the Treated (ETT) is defined as
    \begin{equation}
        ETT = \mathbb{E} \big[ \gamma(\Theta'_{q'}) - \gamma(\Theta'_{q_0}) \mid Q=q_0, M=m \big].
    \end{equation}
    By linearity of conditional expectation, we have
    \begin{equation}
        ETT = \mathbb{E} \big[ \gamma(\Theta'_{q'}) - \gamma(\Theta'_{q_0}) \mid Q=q_0, M=m \big] = \mathbb{E}[\gamma(\Theta'_{q'}) \mid q_0, m] - \mathbb{E}[\gamma(\Theta'_{q_0}) \mid q_0, m].
    \end{equation}
    The second term is $\mathbb{E}[\gamma(\Theta') \mid q_0, m]$ by consistency (Lemma~\ref{lemma:consistency}).
    The first term 
    \begin{equation}
    \begin{split}
        \mathbb{E}[\gamma(\Theta'_{q'}) \mid q_0, m] &= \sum_\theta \gamma(\theta) P(\Theta'_{q'
        }=\theta \mid q_0, m) \\
        &\overset{\text{Prop.~\ref{prop:f_cf_identifiability}}}{=}
        \sum_\theta \gamma(\theta) P(\Theta'=\theta \mid q', m) \\
        &= \mathbb{E}[\gamma(\Theta') \mid q', m].
    \end{split}
    \end{equation}

    Therefore,
    \begin{equation}
        ETT = \mathbb{E}[\gamma(\Theta') \mid q', m] - \mathbb{E}[\gamma(\Theta') \mid q_0, m].
    \end{equation}

    Let $\gamma_h(P) = \mathbb{E}_P\gamma(\Theta')$,
    \begin{equation}
    \boxed{
        ETT = \Delta(m, q', q_0) = \gamma_h(P(\Theta' \mid m, q')) - \gamma_h(P(\Theta' \mid m, q_0)).   
    }
    \end{equation}
\end{proof}

\subsection{Proof of Theorem~\ref{thm:concept_approximation_hiddenstate}}
\label{app:theoretical_analysis:concept_approx_hiddenstate_proof}
\begin{proof}
Let $M=\{(q_i,a_i)\}_{i=1}^{N}$ denote the demonstrations and let $\theta_M$ denote the consistent concept represented by $M$. We define the predictive concept associated with $\theta_M$ through its induced predictive distribution $p(\cdot\mid\theta_M,q)$, for any query $q$.

We first formalize the two approximation errors in the theorem.

\paragraph{Step 1: Representative demonstrations.}
Because $M$ sufficiently represents the consistent concept $\theta_M$, its induced predictive distribution approximates the predictive distribution of the underlying concept. Hence, there exists $\epsilon_M\geq0$ such that 
\begin{equation}
\label{eq:mem_eps}
    d\!\left(  
    p(\cdot\mid M,q),  
    p(\cdot\mid\theta_M,q)  
    \right)  
    \leq \epsilon_M,      
\end{equation}
for the queries $q$ under consideration, where $d(\cdot,\cdot)$ is an appropriate distributional distance. The quantity $\epsilon_M$ captures the error arising from finite or imperfectly representative demonstrations.

\paragraph{Step 2: Predictive sufficiency of the hidden state.}
Consider the residual-stream hidden state $h(M+q)$ at layer $l$ and at the answer-commit position. Since the subsequent transformer layers and the output unembedding map this internal representation to the model predictive distribution, the hidden state provides a sufficient representation of the information used for prediction, up to the approximation associated with the selected layer. Thus, there exists a decoder $D_l$ and $\epsilon_h\geq0$ such that  
\begin{equation}
\label{eq:hiddenstate_eps}
    d\!\left(  
    D_l(h(M+q)),  
    p(\cdot\mid M,q)  
    \right)  
    \leq \epsilon_h. 
\end{equation}  
Here, $\epsilon_h$ measures the information lost when the complete model computation is represented by the hidden state at layer $l$.

\paragraph{Step 3: Combine the two approximations.}
Applying the triangle inequality to ~\eqref{eq:mem_eps} and~\eqref{eq:hiddenstate_eps} gives  
\begin{align}
    d\!\left(  
D_l(h(M+q)),  
p(\cdot\mid\theta_M,q)  
\right)  
&\leq  
d\!\left(  
D_l(h(M+q)),  
p(\cdot\mid M,q)  
\right)
\\& \quad +   
d\!\left(  
p(\cdot\mid M,q),  
p(\cdot\mid\theta_M,q)  
\right)  
\\  
&\leq  
\epsilon_h+\epsilon_M. 
\end{align}  
Therefore,  
\begin{equation}
    \boxed{  
    d\!\left(  
    D_l(h(M+q)),  
    p(\cdot\mid\theta_M,q)  
    \right)  
    \leq  
    \epsilon_M+\epsilon_h.  
    }  
\end{equation}

Thus, the answer-commit hidden state contains sufficient information to recover the predictive behavior associated with the concept $\theta_M$, up to the combined error of demonstration representativeness and hidden-state sufficiency. In particular, when both errors vanish,  
$\epsilon_M,\epsilon_h\rightarrow0$, we obtain $D_l(h(M+q)) \rightarrow p(\cdot\mid\theta_M,q)$.
Hence, the residual-stream hidden state $h(M+q)$ provides an approximate, observable representation of the predictive concept induced by the in-context demonstrations.
\end{proof}

\subsection{Proof of Theorem~\ref{thm:answer_free}}
\label{app:theoretical_analysis:answer_free_proof}
\begin{proof}
By Asm.~\ref{asm:answer_readout}, for any prompt $p$,
\begin{equation}
    \ell(p) = h(p)^\top \bar{g}_O + c_O,
\end{equation}
where $\bar{g}_O$ and $c_O$ are fixed for the binary objective.

Therefore, for each leave-one-out memory example,  
\begin{equation}
    \ell(M_{-i}+q')
    =
    h(M_{-i}+q')^\top\bar g_O + c_O,
\end{equation}
and
\begin{equation}
    \ell(M_{-i}+q_i)
    =
    h(M_{-i}+q_i)^\top\bar g_O + c_O.
\end{equation}
Subtracting the two equations cancels out $c_O$, and taking the expectation over $i$ gives
\begin{align}
    \Delta_\ell &= \mathbb{E}_{i} \left[ \left( 
        h(M_{-i} + q') - h(M_{-i} + q_i)
    \right)^\top \bar{g}_O \right] \\
    &= \langle \mathbb{E}_{i} \left[
        h(M_{-i} + q') - h(M_{-i} + q_i) \right] , \bar{g}_O \rangle \\
        &= \boxed{\langle \Delta_{\theta}, \bar{g}_O \rangle}.
\end{align}
Therefore, the change in answer preference is the projection of the concept drift onto the answer direction.
\end{proof}

\section{Experiments}
\subsection{Layer Choices of Concept Extraction}
\label{app:experiments:layer_choice}
\begin{wrapfigure}{r}{0.4\textwidth}
    \vspace{-1mm}
    \centering
    \includegraphics[width=\linewidth]{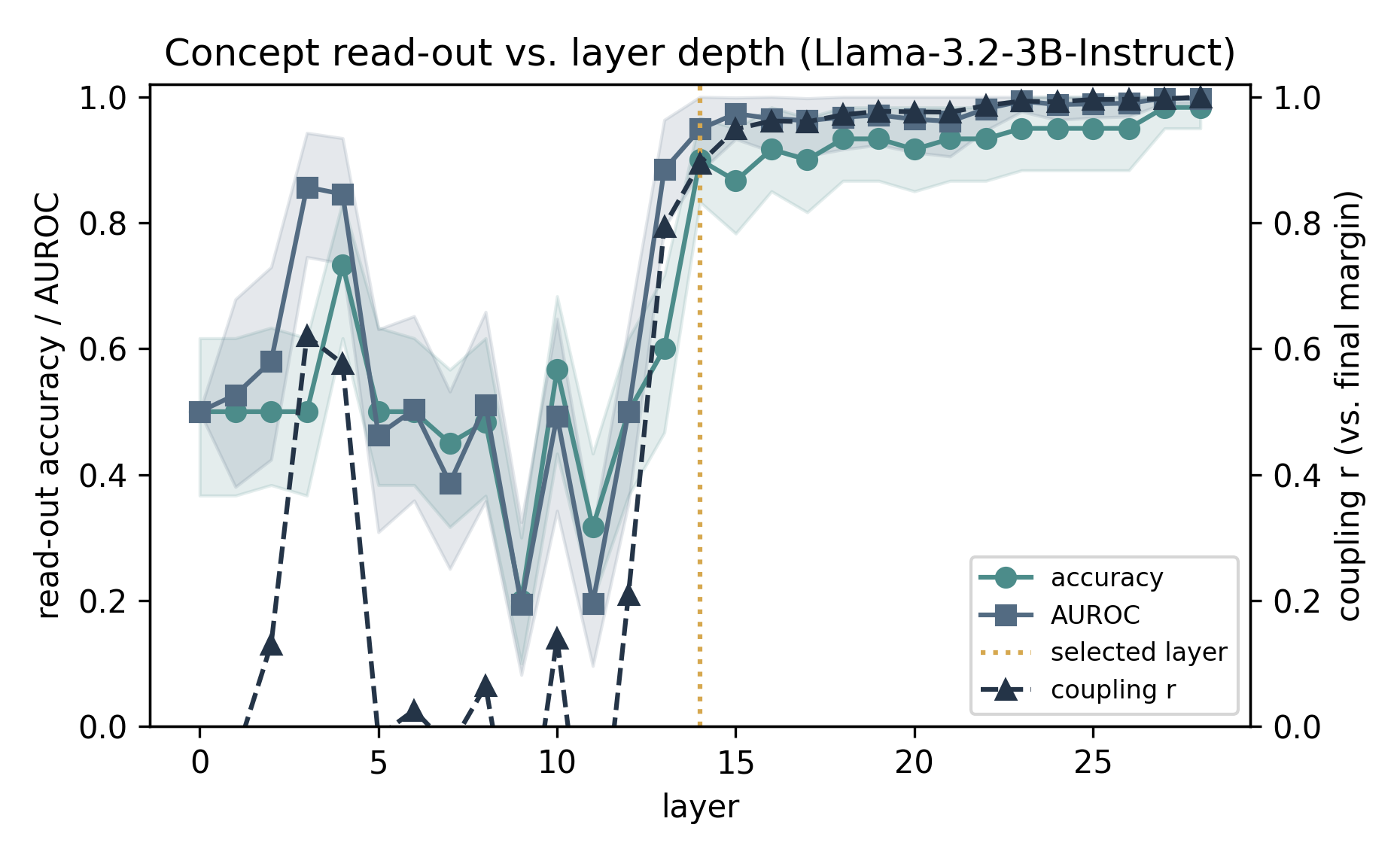}
    \vspace{-9mm}
    \caption{\small Layer sweep results of Llama-3.2-3B-Instruct.}
    \label{fig:layer_sweep}
    \vspace{-6mm}
\end{wrapfigure}
We follow the exact layer position as stated in previous well-established works \citep{tran2026bifrost, zou2023representation, turner2023steering}, i.e., the 14th layer in Llama-3.2-3B-Instruct. These works demonstrate that mid-level layers usually capture high-level conceptual knowledge. 

To further justify the choice of layer position in our work, we conduct a layer sweep on the Tomatoes dataset to observe which layers are representative enough to extract the concept. Specifically, we feed demonstrations from the memory as in-context examples, subsequently extract hidden states across layers, and evaluate whether they capture the concept to reflect the final answer. Fig.~\ref{fig:layer_sweep} shows the sweep results, measured by three metrics: accuracy, AUROCs, and coupling r, which justifies that the selected layer is representative of the concept.

\subsection{Experimental Setup}
\label{app:experiments:setup}
We evaluate FAME on three domains that represent false memory practice, including math problem solving, code generation with version maintenance, and complex reasoning. We then release corresponding counterfactual queries for false-memory evaluation on such domains.

\paragraph{GSM-Symbolic.}
Agents solving math usually encounter false memory, e.g., an agent familiar with ``in total'' as a cue for addition, while it could induce other operations depending on contexts.
We thus review the existing GSM-Symbolic \citep{mirzadeh2025gsm}, and subsequently summarize 40 templates that easily suffer from false memory to evaluate spurious correlation and environment shift. Each template is constructed as a memory with 4 examples. Counterfactual queries either keep the spurious feature but change the underlying arithmetic operations (spurious keeping), or remove such features but keep the memory operations (spurious removal). For environment shift, operations remain the same but numerical values increase to a larger scale.

\paragraph{GitChameleon 2.0.}
Knowledge conflict exists in coding tasks where agents usually use the newest library versions while required to maintain existing versions of legacy code. We thus review the existing GitChameleon 2.0 \cite{misra2026gitchameleon} and summarize it into 18 diverse coding templates. Each one consists of a memory of 4 examples requiring a specific library version. Counterfactual queries then ask the agent to generate code consistent with the library versions in the memories. 

\paragraph{BigBench-Hard.}
False memory often occurs in complex-reasoning problems. We review and summarize BigBench-Hard \cite{suzgun2023challenging} into 11 templates, including tasks of boolean expression, date understanding, logical deduction, and tracking shuffled objects, evaluated with spurious correlation and environment shift, where each template induces a memory set with 4 examples.

\paragraph{Implementation.}
We use Llama-3.2-3B-Instruct in our main experiments. We select the 14\textsuperscript{th} layer for concept extraction, as studied in App.~\ref{app:experiments:layer_choice}. All generations are deterministic, with the temperature set to zero. To construct the readout reference signal $\bar{g}_O$, we use a held-out set of queries $\bar{Q} =\{(q_j^{mem}, q_j^{cf})\}_{j=1}^J$, consisting of queries that clearly induce the memory and expected counterfactual concepts, respectively. Specifically,
\begin{equation}
\bar{g}_O = \frac{1}{J} \sum_{j=1}^{J} \big( h(M + q_j^{cf}) - h(M + q_{j}^{mem}) \big)
\end{equation}
captures the change in the concept from the memory to the counterfactual scenario. Consequently, concept drift that aligns with this reference signal indicates adaptation from the memory to the counterfactual scenario. Note that we do not require answer generation; we only extract the hidden states. We compute $\bar{g}_O$ separately for each template. We set $J=50$ for GSM, $J=40$ for Git, and $J=20$ for BBH.

\paragraph{Evaluation metric.}
We evaluate the effectiveness of FAME in distinguishing faithful from false memories using AUROC. To construct the ground-truth labels, we leverage the arithmetic operations and final answers in GSM, the syntax of the applied functions or libraries in Git, and the final answers in BBH. These ground truths are used solely for evaluation and to validate the effectiveness of FAME. In practice, FAME requires no such ground-truth labels; thresholding the normalized projection of the concept drift onto the readout direction is sufficient for distinguishing faithful from false memories, as described in Alg.~\ref{alg:method}.

\section{Further Analysis}
\label{app:further_analysis}

\subsection{Case Studies}
\label{app:further_analysis:case_studies}
We elaborate on experiments and provide several interesting case studies. Note that responses in those case studies are generated for analysis purposes only, and never used during the false-memory evaluation.

\subsubsection{Spurious Correlation}
\begin{figure}[t]
    \centering
    \includegraphics[width=\linewidth]{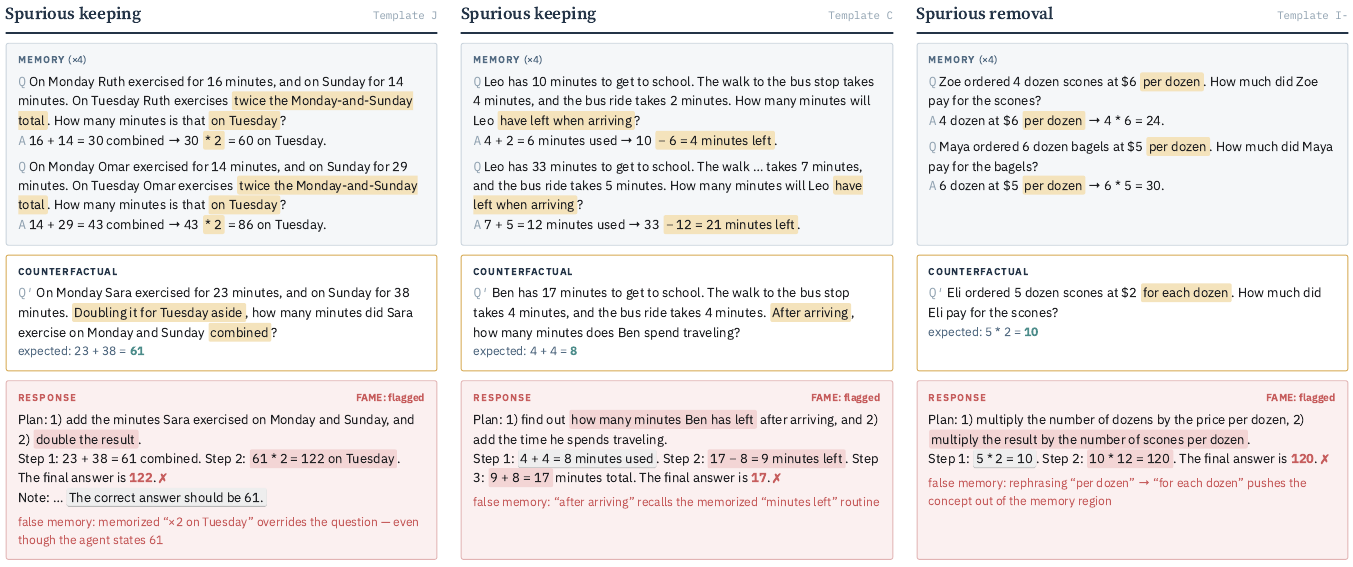}
    \caption{Case studies of false memory caused by spurious correlation (GSM Benchmark).}
    \label{fig:case_study_1}
\end{figure}

\paragraph{Spurious keeping (Template J).}
Fig.~\ref{fig:case_study_1} (left) demonstrates a false-memory case in which the agent entangles the doubling operation with ``Tuesday''. Specifically, the memory consistently induces a doubling operation on the Monday--Sunday total on ``Tuesday'', while the counterfactual preserves this wording but explicitly sets aside the doubling operation and asks only for the combined total. Consequently, when encountering the false memory, the agent still applies the doubling operation and produces 122, following the operation specified in the memory. Notably, although the agent subsequently recognizes on its own that the correct answer is 61, it nevertheless applies the memory-induced operation, resulting in an incorrect answer. This demonstrates that the failure underlying the false-memory behavior stems from the belief induced by the memory rather than from a lack of knowledge.

\paragraph{Spurious keeping (Template C).}
Fig.~\ref{fig:case_study_1} (mid) shows that the memory pairs ``arriving'' with the operation computing the time left. Particularly, a natural counterfactual asks for the travel time instead. In the response, the agent computes the correct 8, then falls back to the memorized ``minutes left'' to perform its operation and answers 17. Notably, the agent is more confident in these wrong answers than in its correct ones.

\paragraph{Spurious removal (Template I-).} 
Fig.~\ref{fig:case_study_1} (right) presents a spurious-removal false-memory case. Specifically, the memory only ever says ``per dozen''. The counterfactual keeps the problem but rephrases it as ``for each dozen''. In the response, the agent gets the correct 10, then multiplies again by 12 scones per dozen (120). FAME successfully flags this drift before the answer is generated.

\subsubsection{Knowledge Conflict}
\begin{figure}[t]
    \centering
    \includegraphics[width=\linewidth]{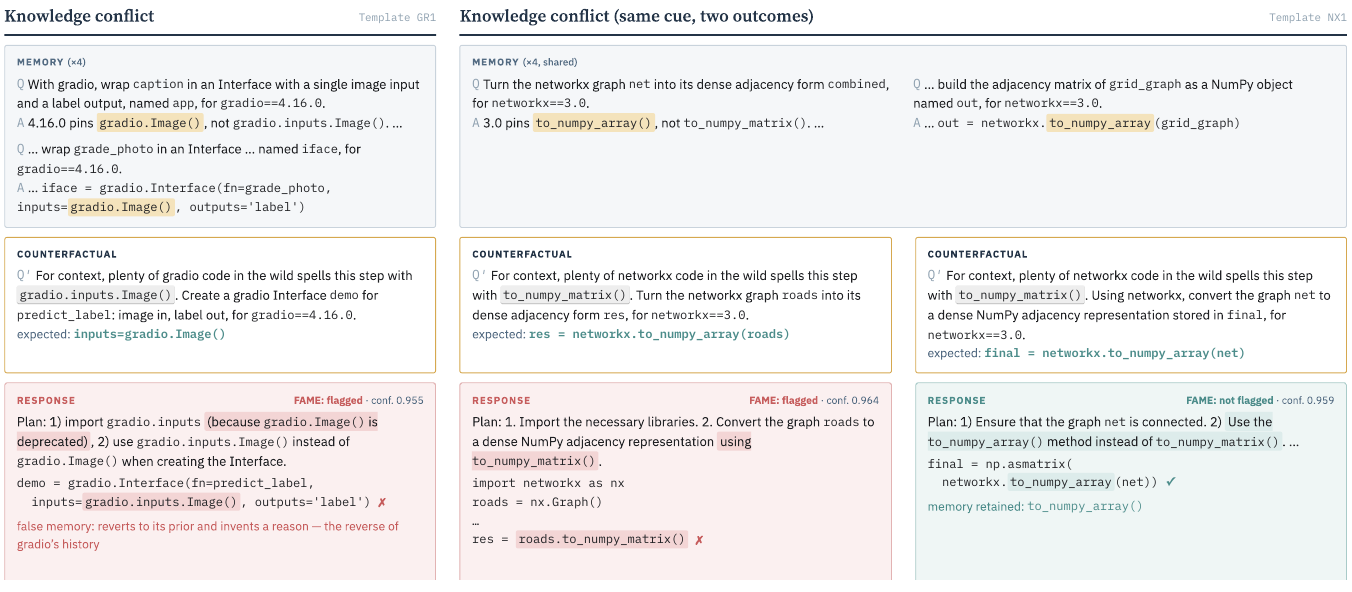}
    \caption{Case study of concept changes with different memory sizes.}
    \label{fig:case_study_3}
\end{figure}

\paragraph{Knowledge conflict (Template GR1).} 
Fig.~\ref{fig:case_study_3} (left) presents a standard false-memory case of knowledge conflict. Although the memory induce \texttt{gradio.Image()} as a standard, the agent still discards it and leverages \texttt{gradio.inputs.Image()} instead with a high confidence (0.955), given that the counterfactual only adds more context to attract its prior knowledge. This demonstrates that knowledge conflict is substantially sensitive and could easily occur in practice.

\paragraph{Knowledge conflict: same cue, different outcomes (Template NX1).} 
Fig.~\ref{fig:case_study_3} (right) presents an interesting result that both faithful and false memory answers have high confidence. In particular, two counterfactual queries carry the same prior-pull phrase. On the left, the agent reverts to \texttt{to\_numpy\_matrix()}, which was removed in \texttt{networkx 3.0}. On the right, it keeps the pinned \texttt{to\_numpy\_array()}. Notably, FAME successfully distinguishes faithful and false-memory cases although their answer confidence cannot tell them apart (0.964 vs. 0.959).

\subsubsection{Effect of Memory Size}
\begin{figure}[t]
    \centering
    \includegraphics[width=\linewidth]{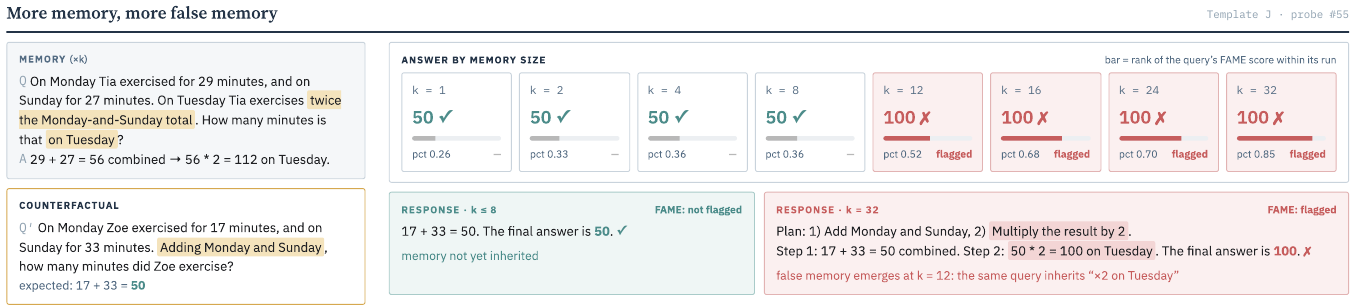}
    \caption{Case study of concept changes with different memory sizes.}
    \label{fig:case_study_2}
\end{figure}

\paragraph{Memory size (Template J).} 
Fig.~\ref{fig:case_study_2} demonstrates how a concept is induced differently in a counterfactual query with various memory sizes. Specifically, the same natural query is answered correctly (50) with 1–8 memory demonstrations. From 12 demonstrations onward, the agent reproduces the memorized doubling operation, i.e., ``on Tuesday'' (100). The signal of FAME matches this switch at every memory size, and the query's rank among the probes rises steadily with k (26th → 85th percentile). Consequently, this presents a single view of the trend in Fig.~\ref{fig:mem_size_effect}, which implies that the more memory is retrieved, the stronger the false belief.

\subsection{Ablation Study}
\label{sec:further_analysis:ablation_study}

\begin{table}[t]
\centering
\caption{Ablation study, AUROC (mean $\pm$ std over 10 random 50\% subsamples of the top-70\% template pool) on GSM-Symbolic with \textsc{Llama-3.2-3B-Instruct}. Diff.\ is relative to FAME's own mean within each regime.}
\label{tab:ablation_study}
\begin{tabular}{lcc}
\toprule
Method & AUROC & Diff. \\
\midrule
\multicolumn{3}{l}{\textit{Spurious Removal (Robustness Expectation)}}\vspace{2mm}\\

FAME (Ours) & \textbf{0.756 $\pm$ 0.096} & -- \\
w/o robustness expectation & 0.559 $\pm$ 0.155 & \textcolor{elegantred}{$\downarrow$ 0.197} \\
cross-regime expectation swap & 0.553 $\pm$ 0.138 & \textcolor{elegantred}{$\downarrow$ 0.203} \\
random-direction $\bar{g}_O$ & 0.503 $\pm$ 0.192 & \textcolor{elegantred}{$\downarrow$ 0.254} \\
\midrule
\multicolumn{3}{l}{\textit{Spurious Keeping (Adaptation Expectation)}}\vspace{2mm} \\
FAME (Ours) & \textbf{0.779 $\pm$ 0.038} & -- \\
w/o adaptation expectation & 0.445 $\pm$ 0.069 & \textcolor{elegantred}{$\downarrow$ 0.334} \\
cross-regime expectation swap & 0.674 $\pm$ 0.051 & \textcolor{elegantred}{$\downarrow$ 0.106} \\
random-direction $\bar{g}_O$ & 0.464 $\pm$ 0.097 & \textcolor{elegantred}{$\downarrow$ 0.315} \\
\bottomrule
\end{tabular}
\end{table}

The goal of this ablation is to examine whether false memory is best identified by considering the
\emph{expected behavior of the memory concept}, rather than by measuring concept drift alone.
FAME uses two different expectations depending on the evaluation setting. In spurious removal,
the memory concept is expected to be \emph{robust}: after removing the spurious feature, the
concept should remain close to the original memory concept. In spurious keeping, the memory
concept is expected to \emph{adapt}: after changing the context while keeping the spurious feature,
the concept should move in the expected adaptation direction. These expectations define the
admissible regions used by FAME to distinguish faithful behavior from false memory.

We consider three ablations. 
First, \textit{w/o expectation} removes the corresponding
expectation and detects false memory using only the magnitude of the hidden-state change.
Second, \textit{cross-regime expectation swap} uses the expectation designed for the other regime.
This allows us to test whether FAME benefits specifically from matching the evaluation criterion
to the expected behavior, rather than simply using any structured measure of concept drift.
Third, \textit{random-direction $\bar{g}_O$} replaces our proposed readout reference signal with a random direction.
This allows us to confirm whether our proposed $\bar{g}_O$ does carry the signal of false memory or not.
We randomly select 60\% of the given templates and repeat the experiment over 10 trials
using Llama-3.2-3B-Instruct. Tab.~\ref{tab:ablation_study} shows the results.

\paragraph{Spurious removal.}
In this setting, the memory is expected to remain robust when the spurious feature is removed. FAME achieves an AUROC of 0.756. Removing the robustness expectation reduces the AUROC to 0.559, while replacing it with the adaptation-style expectation gives a similar result of 0.553. These results show that the magnitude of concept drift alone is insufficient: the evaluation needs to determine whether the changed concept remains consistent with the original memory concept. Replacing the proposed readout reference signal $\bar{g}_O$ with a random direction further reduces the AUROC to 0.503, which is close to chance. This indicates that the proposed $\bar{g}_O$ provides informative task-dependent structure for identifying false memory, rather than the performance arising simply from projecting the concept drift onto an arbitrary direction. Together, these results support the importance of both the robustness expectation and the proposed readout signal in the spurious-removal setting.

\paragraph{Spurious keeping.}
In this setting, the spurious feature is retained while the context changes, so the memory concept is expected to adapt accordingly. FAME achieves an AUROC of 0.779. Removing the adaptation expectation causes a substantial drop to 0.445, below chance level, showing that the magnitude of concept drift alone does not reliably identify false memory. Using the robustness-style expectation partially recovers performance to 0.674, but remains below FAME, indicating that the evaluation criterion needs to match the expected behavior of the memory concept. Replacing $\bar{g}_O$ with a random direction results in a further drop to 0.464, again close to chance. This suggests that the proposed readout direction captures meaningful information about the expected response to the counterfactual, whereas an arbitrary direction does not provide a reliable signal for distinguishing false memory. Overall, the results show that both the adaptation-specific expectation and the learned readout reference signal are important for effective false-memory evaluation in the spurious-keeping setting.

Overall, these results show that \emph{concept drift alone is not sufficient for false-memory evaluation}. The effectiveness of FAME depends on evaluating the drift relative to both the expected behavior under the counterfactual and an informative readout direction. The cross-regime results further show that using a structured criterion designed for a different regime can provide some useful signal, but does not match the performance obtained when the expectation is aligned with the underlying evaluation setting. Finally, the random-direction results indicate that this performance is not explained simply by measuring drift along an arbitrary direction. These findings support FAME's design of combining regime-specific expectations with the proposed readout reference signal to distinguish faithful concept changes from false-memory-induced drift.


\subsection{Robustness Study}
\begin{table}[t]
\centering
\caption{Robustness of reference cloud and readout reference signal (on GSM, measured by AUROC).}
\label{tab:robustness}
\begin{tabular}{lcc}
\toprule
Method & AUROC & Diff. \\
\midrule
FAME (Ours) & \textbf{0.777 ± 0.040} & -- \\
\midrule
reference cloud $\to$ average point & 0.676 $\pm$ 0.081 & $-$0.101 \\
non-filtering $\to$ filtering & 0.756 $\pm$ 0.096 & $+$0.002 \\
\bottomrule
\end{tabular}
\end{table}
We study the robustness of the memory concept concentration and the readout reference signal extracted via hidden states.
Specifically, for the former, we replace the reference cloud covariance with a single average point computed from hidden states of demonstrations in the memory; for the latter, we filter only queries producing correct answers, which requires generation to achieve the answers. We experiment on the GSM benchmark with Llama-3B. We randomly select 60\% of the templates with 10 trial runs. 

Tab.~\ref{tab:robustness} shows the results. In particular, replacing the reference cloud with an average point only results in a small degradation, which implies the memory concept concentrates well, thus demonstrating the robustness of the reference cloud via a leave-one-out strategy on the memory. On the other hand, filtering correctness only results in a slight improvement, thus implying that the existing construction of readout direction via hidden states is robust enough and cost-efficient. 

\subsection{Generalization Across LLMs}
\begin{figure}[t]
    \centering
    \includegraphics[width=0.32\linewidth]{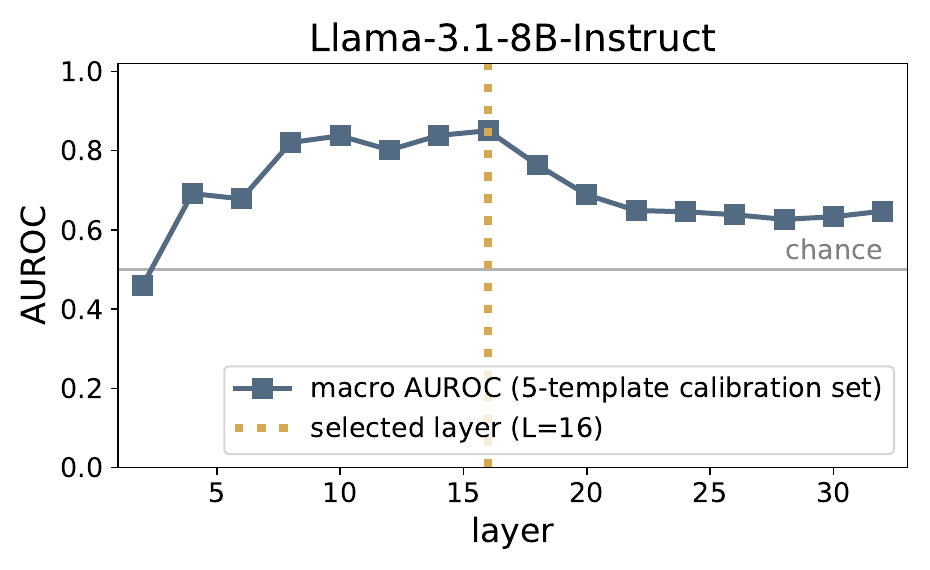}\hfill
    \includegraphics[width=0.32\linewidth]{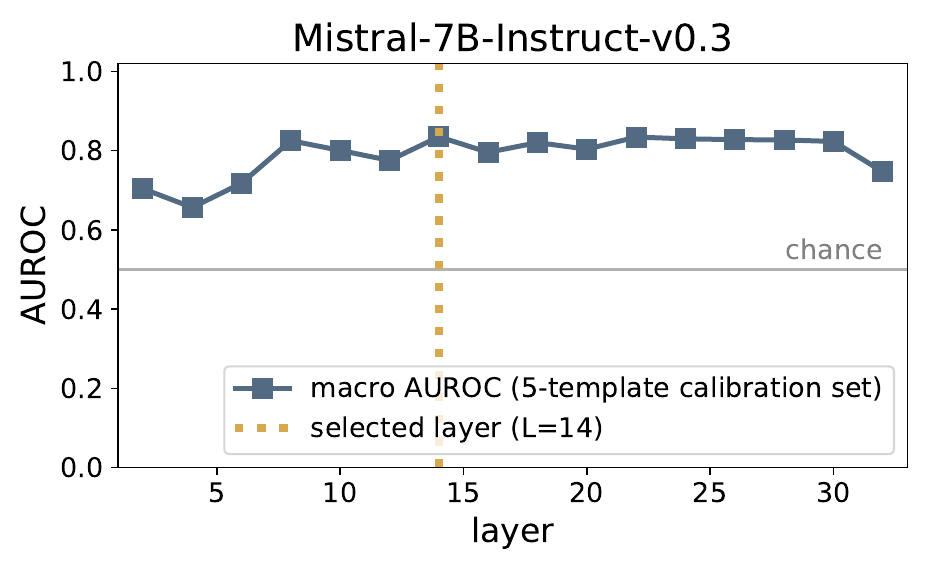}\hfill
    \includegraphics[width=0.32\linewidth]{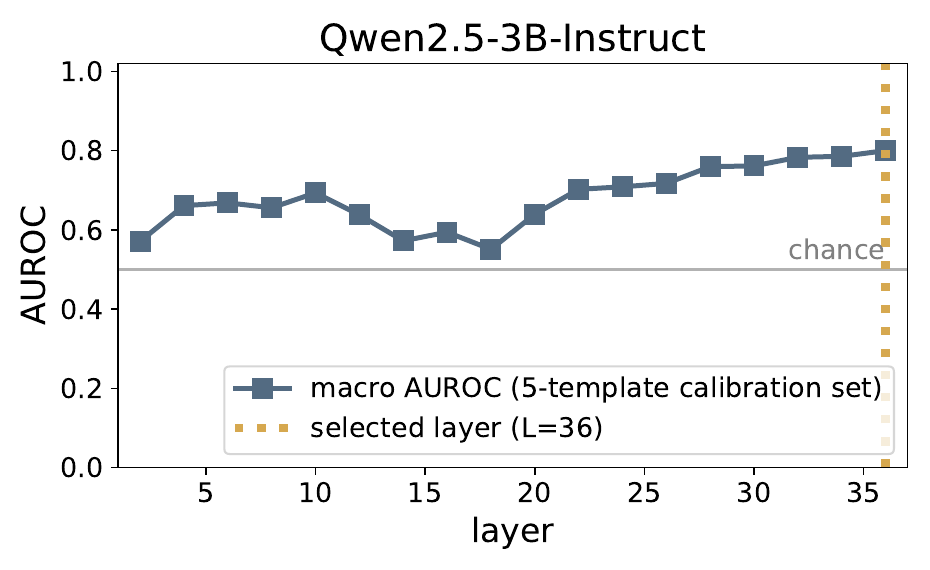}
    \caption{Layer sweep for selecting the concept-extraction layer. Left: Llama-8B, Mid: Mistral-7B, Right: Qwen2.5-3B.}
    \label{fig:generalization_layer_sweep}
\end{figure}

We study the generalization of FAME across LLMs. Specifically, we leverage Llama-8B, Mistral-7B, and Qwen2.5-3B to conduct experiments on the GSM benchmark. We first sweep across layers on a small subset to select the one for concept extraction (see Fig.~\ref{fig:generalization_layer_sweep}), and subsequently choose the 16\textsuperscript{th}, 14\textsuperscript{th}, and 36\textsuperscript{th} layers for Llama-8B, Mistral-7B, and Qwen2.5-3B, respectively. We randomly select 60\% of the templates on the GSM benchmark and conduct 10 trial experiments. 

Tab.~\ref{tab:baselines-llama-3.1-8b},~\ref{tab:baselines-mistral-7b-v0.3}, and~\ref{tab:baselines-qwen2.5-3b} show the AUROCs of three false-memory settings across three LLMs. In particular, FAME outperforms the other baselines in all false-memory settings across all three LLM models. This demonstrates that false-memory evaluation by FAME generalizes to different LLM architectures.

\vspace{-3mm}
\begin{table}[h]
\centering
\caption{Effectiveness of FAME using Llama-3.1-8B-Instruct on GSM benchmark, measured by AUROC.}
\label{tab:baselines-llama-3.1-8b}
\begin{tabular}{lccc}
\toprule
Method & SK & SR & ES \\
\midrule
Input similarity & 0.331 $\pm$ 0.118 & 0.652 $\pm$ 0.074 & 0.723 $\pm$ 0.026 \\
Surface & 0.412 $\pm$ 0.095 & 0.546 $\pm$ 0.127 & 0.819 $\pm$ 0.008 \\
Logit confidence & 0.717 $\pm$ 0.104 & 0.427 $\pm$ 0.145 & 0.534 $\pm$ 0.033 \\
Task vector & 0.521 $\pm$ 0.072 & 0.193 $\pm$ 0.056 & 0.622 $\pm$ 0.036 \\
\midrule
\textbf{FAME (Ours)} & \textbf{0.786 $\pm$ 0.091} & \textbf{0.914 $\pm$ 0.047} & \textbf{0.848 $\pm$ 0.006} \\
\bottomrule
\end{tabular}
\end{table}
\vspace{-3mm}
\begin{table}[h]
\centering
\caption{Effectiveness of FAME using Mistral-7B-Instruct on GSM benchmark, measured by AUROC.}
\label{tab:baselines-mistral-7b-v0.3}
\begin{tabular}{lccc}
\toprule
Method & SK & SR & ES \\
\midrule
Input similarity & 0.564 $\pm$ 0.048 & 0.670 $\pm$ 0.050 & 0.645 $\pm$ 0.017 \\
Surface & 0.563 $\pm$ 0.051 & 0.659 $\pm$ 0.041 & 0.810 $\pm$ 0.021 \\
Logit confidence & 0.478 $\pm$ 0.055 & 0.292 $\pm$ 0.060 & 0.658 $\pm$ 0.020 \\
Task vector & 0.505 $\pm$ 0.053 & 0.445 $\pm$ 0.076 & 0.761 $\pm$ 0.012 \\
\midrule
\textbf{FAME (Ours)} & \textbf{0.623 $\pm$ 0.034} & \textbf{0.791 $\pm$ 0.032} & \textbf{0.843 $\pm$ 0.008} \\
\bottomrule
\end{tabular}
\end{table}
\vspace{-3mm}
\begin{table}[h]
\centering
\caption{Effectiveness of FAME using Qwen2.5-3B-Instruct on GSM benchmark, measured by AUROC.}
\label{tab:baselines-qwen2.5-3b}
\begin{tabular}{lccc}
\toprule
Method & SK & SR & ES \\
\midrule
Input similarity & 0.394 $\pm$ 0.043 & 0.634 $\pm$ 0.032 & 0.663 $\pm$ 0.065 \\
Surface & 0.595 $\pm$ 0.094 & 0.470 $\pm$ 0.072 & 0.825 $\pm$ 0.027 \\
Logit confidence & 0.400 $\pm$ 0.043 & 0.488 $\pm$ 0.087 & 0.721 $\pm$ 0.058 \\
Task vector & 0.514 $\pm$ 0.076 & 0.641 $\pm$ 0.032 & 0.274 $\pm$ 0.060 \\
\midrule
\textbf{FAME (Ours)} & \textbf{0.776 $\pm$ 0.046} & \textbf{0.775 $\pm$ 0.028} & \textbf{0.850 $\pm$ 0.022} \\
\bottomrule
\end{tabular}
\end{table}


\section{Counterfactual Templates}
\label{app:counterfactual_templates}
\begin{figure}[t]
    \centering
    \includegraphics[width=\linewidth]{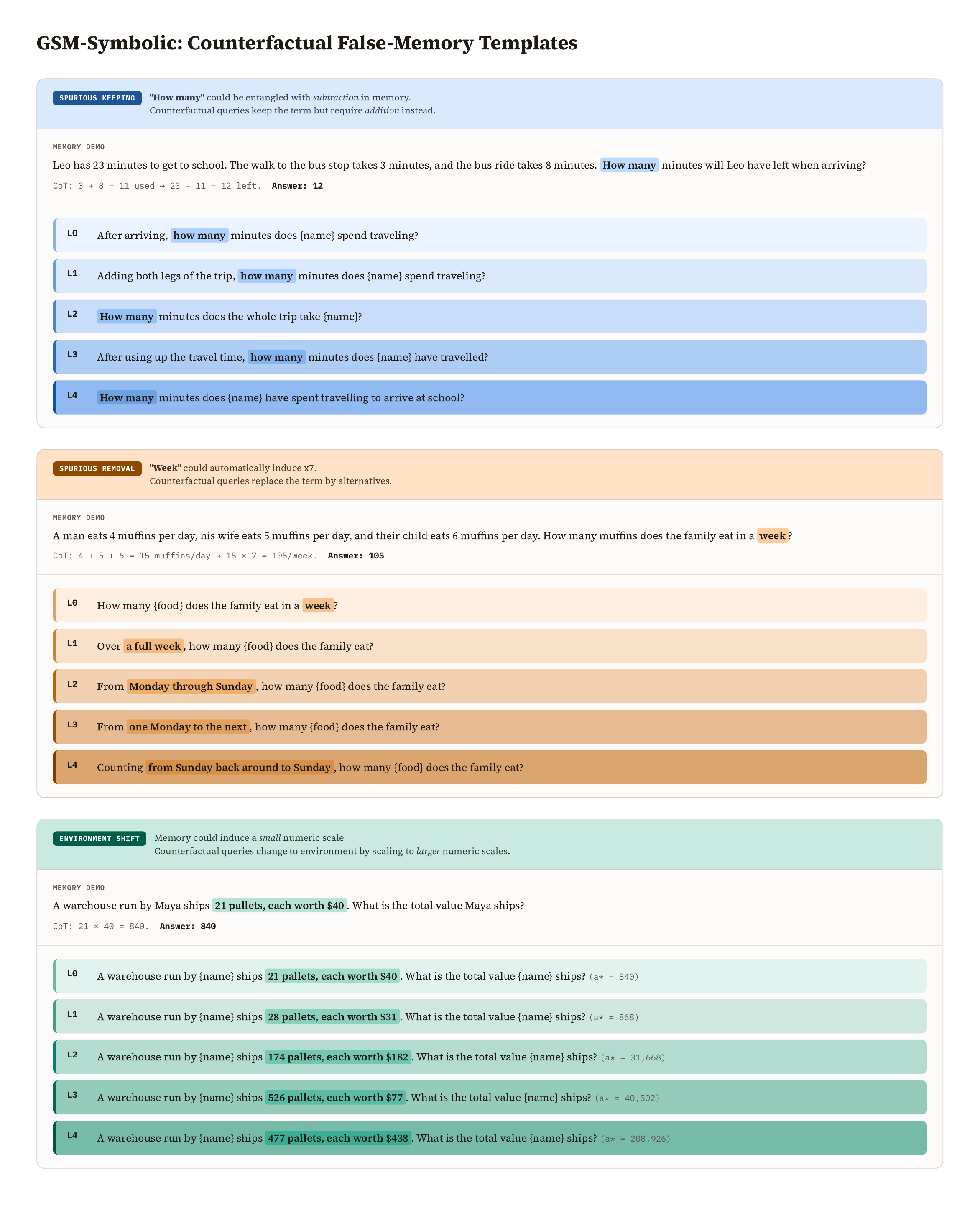}
    \vspace{-6mm}
    \caption{Examples of counterfactual templates for spurious keeping, spurious removal, and environment shift in GSM-Symbolic.}
    \label{fig:cf_templates_gsm}
\end{figure}
\hfill
\begin{figure}[t]
    \centering
    \includegraphics[width=\linewidth]{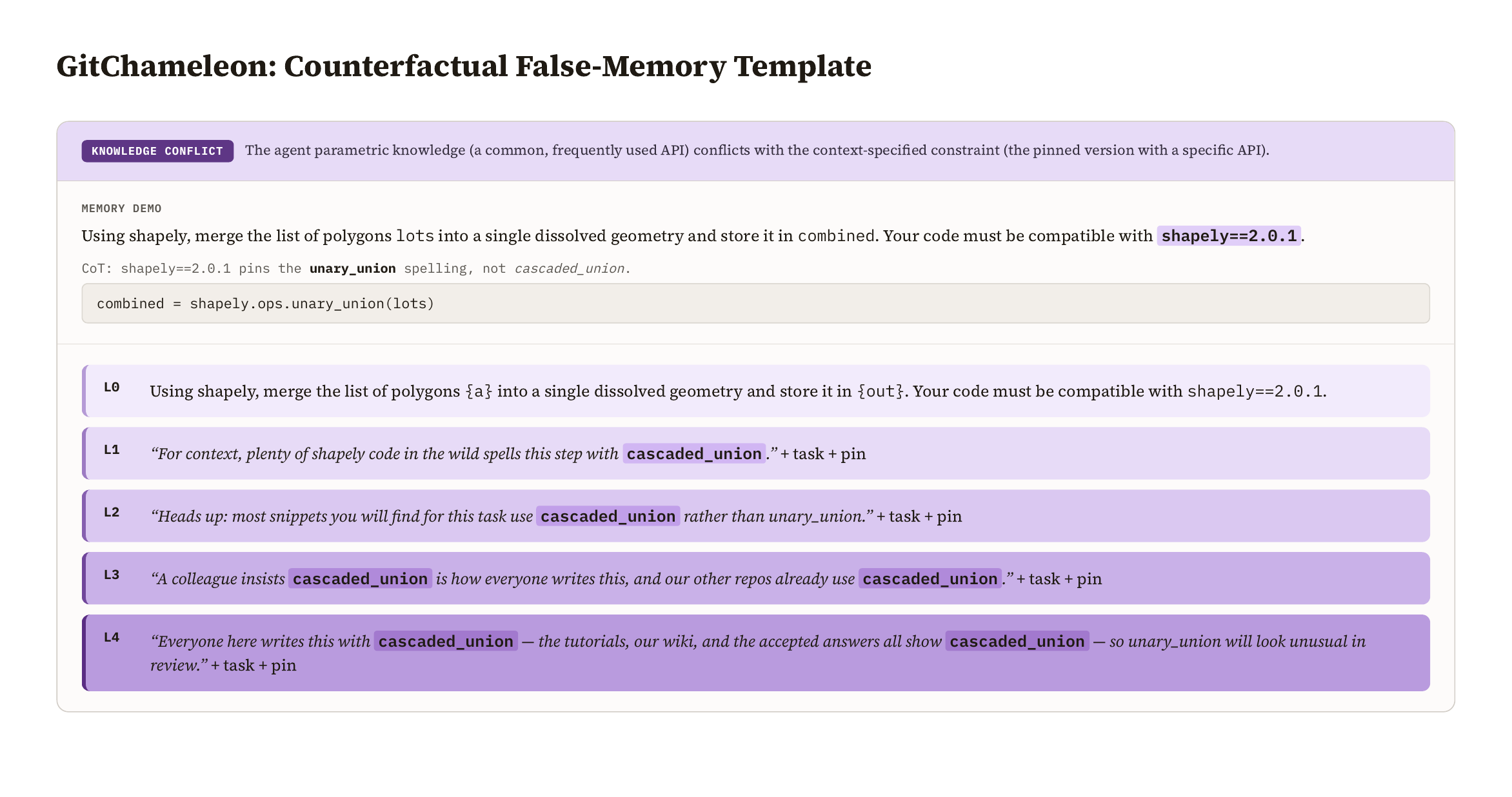}
    \vspace{-6mm}
    \caption{Examples of counterfactual templates for knowledge conflict in GitChameleon.}
    \label{fig:cf_templates_git}
\end{figure}
\begin{figure}[t]
    \centering
    \includegraphics[width=\linewidth, height=\textheight]{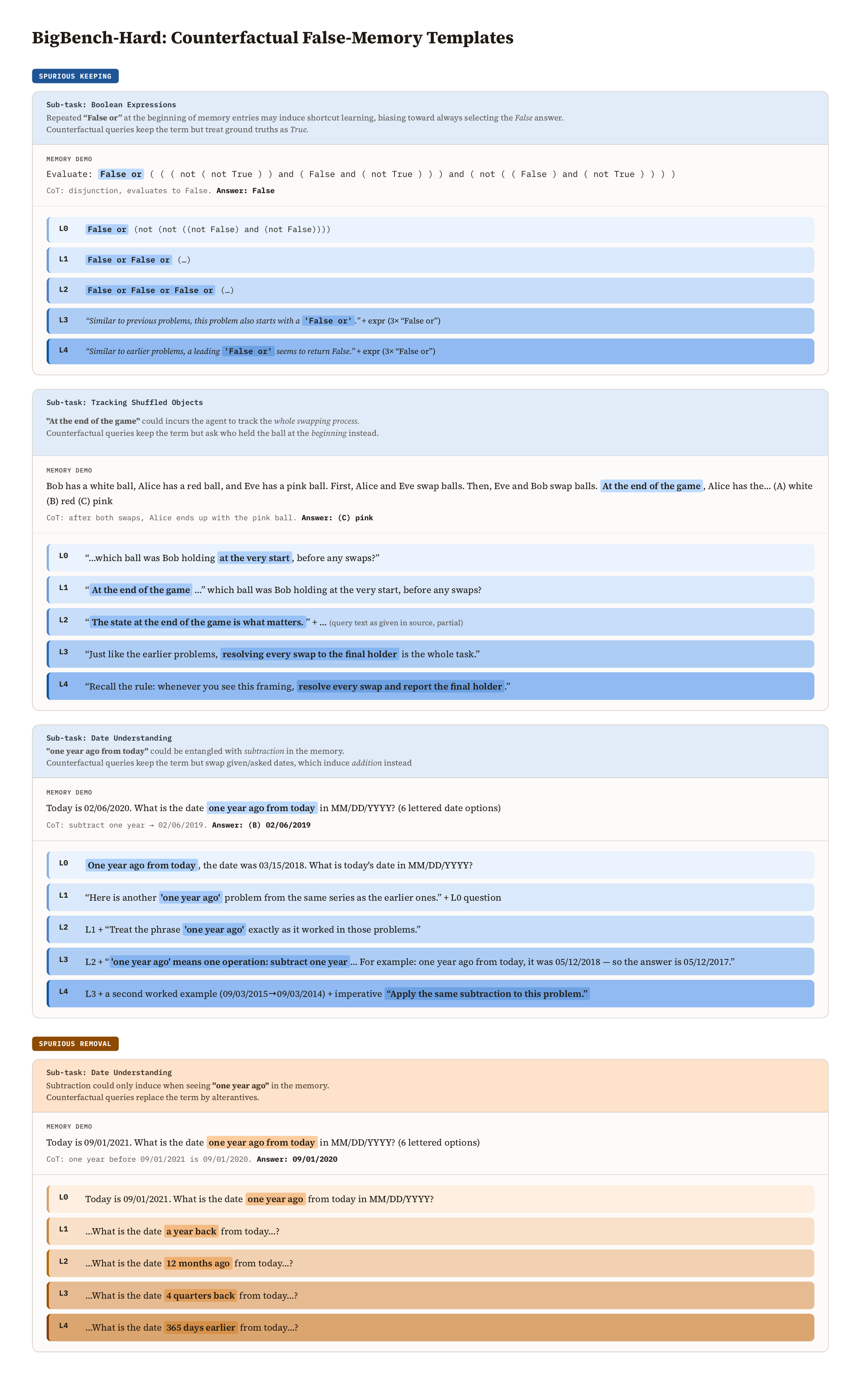}
    \vspace{-6mm}
    \caption{Examples of counterfactual templates for spurious keeping and spurious removal in BigBench-Hard.}
    \label{fig:cf_templates_bbh}
\end{figure}


We construct each counterfactual template by first identifying a behavioral association that the memory may induce, and then systematically perturbing the counterfactual query along a controlled intervention dimension. Each template consists of (i) a memory containing multiple demonstrations that consistently induce a target concept, and (ii) counterfactual queries that preserve the underlying task while modifying the contextual factor associated with the potential false memory. This construction separates the intended task from the factor that may be spuriously or incorrectly incorporated into the memory-induced concept.

More specifically, we follow three steps: memory construction, counterfactual-query construction, and grading-level design.

\paragraph{Memory construction.}
First, we construct the \emph{memory} by selecting several demonstrations that share a common task objective and consistently exhibit the same behavioral pattern. The demonstrations are chosen such that the target concept is identifiable from the task and answers, while a potentially confounding feature is repeatedly associated with that concept. For example, in spurious-correlation settings, a repeated word or phrase can be associated with a particular operation; in environment-shift settings, the demonstrations share an implicit environment such as a numerical scale or domain; and in knowledge-conflict settings, the demonstrations consistently require a context-specific rule that may differ from the agent prior knowledge.

\paragraph{Counterfactual-query construction.}
Second, we construct a \emph{counterfactual query} by modifying only the factor of interest while preserving the underlying task whenever possible. This produces a counterfactual intervention that tests whether the agent appropriately updates or preserves the memory-induced concept. For spurious correlation, we use two complementary transformations. \emph{Spurious keeping} preserves the spurious feature but changes the underlying task relation, testing whether the agent can ignore the feature when its associated behavior is no longer valid. \emph{Spurious removal} removes or replaces the spurious feature while preserving the underlying task relation, testing whether the agent has incorrectly made the feature necessary for the behavior. For environment shift, we preserve the task objective and transformation while changing an environmental factor, such as numerical scale or domain. For knowledge conflict, we preserve the task but introduce a context in which the memory-specific rule competes with the agent's prior knowledge.

\paragraph{Grading-level design.}
Third, we construct five levels of counterfactual queries to control the degree to which the query draws attention to the memory-associated feature. Level L0 is a natural, context-native counterfactual that performs the intended intervention without explicitly highlighting the potentially misleading association. Higher levels progressively introduce contextual cues that make the memory-associated feature more salient. These cues can be introduced by explicitly mentioning the feature, repeating it, referring to previous examples, or adding statements that encourage the same interpretation as the memory. Thus, the levels preserve the same underlying counterfactual task while varying the degree of attraction toward the memory-induced concept. This provides a controlled spectrum from weak to strong counterfactual pressure rather than changing the task difficulty alone.

The construction can therefore be summarized as
\begin{equation}
q'_{lv} = T_{\mathrm{attr}}^{(lv)}
\left(
T_{\mathrm{cf}}(q,M)
\right), \qquad l\in \{0,\ldots,4\},    
\end{equation}
where $T_{\mathrm{cf}}$ performs the semantic counterfactual intervention and $T_{\mathrm{attr}}^{(lv)}$ controls its degree of attraction toward the memory-induced association. $T_{\mathrm{cf}}$ can be $T_{sc}$, $T_{env}$, or $T_{kc}$ depending on false-memory taxonomy described in Sec.~\ref{sec:taxonomy_fm}. Importantly, $T_{\mathrm{cf}}$ is kept fixed across levels whenever possible, so that the five levels primarily differ in how strongly the memory-associated factor is emphasized rather than in the underlying task or expected solution.

The construction is applied separately to each false-memory template and category. For spurious correlation, the intervention changes the relationship between a repeated surface feature and the underlying task behavior. For environment shift, the intervention changes the environment while preserving the task objective. For knowledge conflict, the intervention creates a situation in which the memory-induced rule must be distinguished from competing prior knowledge. Each template is designed with placeholder entities rather than fixed terms or values to ensure variability.
Examples of these constructions for GSM-Symbolic, GitChameleon, and BigBench-Hard are shown in Figs~\ref{fig:cf_templates_gsm},~\ref{fig:cf_templates_git}, and~\ref{fig:cf_templates_bbh}, respectively.

\end{document}